\theoremstyle{plain}
\newtheorem{theorem}{Theorem}
\newtheorem{corollary}{Corollary}
\newtheorem{hypothesis}{Hypothesis}
\theoremstyle{definition}
\newtheorem{definition}{Definition}
\newcommand{\ie}{\emph{i.e.,}\xspace}
\newcommand{\eg}{\emph{e.g.,}\xspace}
\definecolor{our_green}{HTML}{E8F5E9}
\newcommand{\ours}{\textsc{LCO-Emb}\xspace}
\title{Scaling Language-Centric Omnimodal\\ Representation Learning}
\author{
  Chenghao Xiao,\; Hou Pong Chan\footnotemark[2],\; Hao Zhang\footnotemark[2],\;
  Weiwen Xu,\; Mahani Aljunied,\; Yu Rong\footnotemark[3]\\[4pt]
  DAMO Academy, Alibaba Group\\
}
\begin{document}

\maketitle

\renewcommand{\thefootnote}{\fnsymbol{footnote}}
\footnotetext[2]{Corresponding authors, \texttt{kenchanhp@gmail.com, hzhang26@outlook.com}.}
\footnotetext[3]{Project head.}
\renewcommand*{\thefootnote}{\arabic{footnote}}

\begin{abstract}
Recent multimodal embedding approaches leveraging multimodal large language models (MLLMs) fine-tuned with contrastive learning (CL) have shown promising results, yet the underlying reasons behind their superiority remain underexplored. This work argues that a crucial advantage of MLLM-based approaches stems from implicit cross-modal alignment achieved during generative pretraining, where the language decoder learns to exploit multimodal signals within a shared representation space for generating unimodal outputs. Through analysis of anisotropy and kernel similarity structure, we empirically confirm that latent alignment emerges within MLLM representations, allowing CL to serve as a lightweight refinement stage. Leveraging this insight, we propose a \textbf{L}anguage-\textbf{C}entric \textbf{O}mnimodal \textbf{Emb}edding framework, termed \textbf{\ours}. Extensive experiments across diverse backbones and benchmarks demonstrate its effectiveness, achieving state-of-the-art performance across modalities. 
Furthermore, we identify a \textbf{G}eneration-\textbf{R}epresentation \textbf{S}caling \textbf{L}aw (\textbf{GRSL}), showing that the representational capabilities gained through contrastive refinement scales positively with the MLLM's generative capabilities. This suggests that improving generative abilities evolves as an effective paradigm for enhancing representation quality. 
We provide a theoretical explanation of GRSL, which formally links the MLLM's generative quality to the upper bound on its representation performance, and validate it on a challenging, low-resource visual-document retrieval task, showing that continual generative pretraining before CL can further enhance the potential of a model's embedding capabilities.\footnote{Codes, models, and resources are available at \url{https://github.com/LCO-Embedding/LCO-Embedding}.}
\end{abstract}

% ==============================
\section{Introduction}
% ==============================
Cross-modal representation alignment, such as vision-language alignment, has traditionally relied on massive-scale contrastive learning (CL) over paired cross-modal data, as seen in CLIP-style models~\citep{radford2021learning,li2022blip,zhai2023siglip}. Prior work primarily focuses on scaling model size, dataset volume, and batch size during training~\citep{radford2021learning,zhai2023siglip,sun2023eva,he2019moco,chen2020simclr}. While these strategies demonstrate benefits in tasks like linear probing~\citep{radford2021learning,he2019moco,chen2020simclr} and zero-shot classification~\citep{radford2021learning,zhai2023siglip}, performance tends to plateau on complex tasks requiring deeper cross-modal comprehension, \eg multilingual image retrieval~\citep{srinivasan2021wit,thapliyal2022crossmodal}, visual text representations~\citep{xiao2024pixelsentencerepresentationlearning,faysse2024colpali,chia2024mlongdocbenchmarkmultimodalsuperlong}, and tasks involving interleaved multimodal encodings~\citep{wei2024uniir}.  

Recent approaches utilize autoregressive multimodal large language models (MLLMs) as the backbone models, followed by CL fine-tuning, to enhance representational capabilities, leading to improved performance on these complicated tasks~\citep{chen2025mme5,lin2025mmembed,zhang2024gme}. 
However, the underlying reasons for the performance advantages of MLLM-based embedding approaches over traditional CLIP-based ones remain underexplored.
This represents a critical research gap in understanding the limitations of CLIP-style models and the specific strengths MLLMs bring to these challenging scenarios.

To address this research gap, we conduct a systematic study of MLLM-based embedding models across modalities. First, we empirically investigate the embedding space patterns of MLLM representations before and after lightweight CL fine-tuning using only textual data, via anisotropy and kernel-level similarity. Our results show that \textit{text-only} fine-tuning not only improves the discriminability of text embeddings but also generalizes to enhance the discriminability of embeddings in non-textual modalities. This finding reveals that \textit{MLLMs achieve implicit cross-modal alignment during generative pretraining}, such that representation activation for one modality generalizes to others. We posit that the generative objective of MLLMs enables them to leverage multimodal information in the same semantic space by learning to generate textual outputs during pretraining. Thus, we argue that the knowledge foundation and intrinsic multimodal alignment established during generative pretraining grant MLLM-based embedding models the fundamental advantages.

Building on the observations, we propose a \textbf{L}anguage-\textbf{C}entric \textbf{O}mnimodal \textbf{Emb}edding framework, termed \ours, that employs language-centric paired data for efficient CL refinement. We highlight that \textit{CL can function as a lightweight, post-hoc refinement step for mapping pre-aligned generative embeddings into a similarity-matching space} in MLLMs, which differs sharply from the computationally intensive CL required by CLIPs for alignment. Accordingly, this emerging paradigm shifts emphasis towards preserving the cross-modal alignment structure established during MLLM pretraining. In line with recent work~\citep{zhang2025qwen3embed,günther2025jinav4}, \ours adopts LoRA~\citep{hu2022lora} for representation activation of MLLM, aiming to enhance its representation capability with minimal disruption to the pretrained generative capabilities and latent multimodal alignment. 

Extensive experiments across diverse backbones and benchmarks show that \ours outperforms state-of-the-art multimodal embedding models trained with much larger multimodal training sets, with text-only training sets. Combining minimal additional multimodal paired data in diverse formats further calibrates the embedding space of \ours for downstream tasks, setting a new state-of-the-art on MIEB \cite{xiao2025mieb}, while also providing competitive performance on audio and videos. Further analysis reveals that LoRA with language-centric contrastive learning yields superior results compared to alternative fine-tuning strategies, suggesting the importance of preserving the latent alignment structure during CL through minimal modification to the MLLM's pretrained knowledge. CL acts less as a means of introducing new knowledge and more as a lightweight activation mechanism, serving primarily to project the embedding space into a similarity-matching subspace.

As \ours relies on the inherent multimodal alignment capability of MLLMs, we further investigate the relationship between potentials of representation quality and the underlying generative ability of MLLMs. Through experiments with backbones of various sizes and generation strengths, we identify a \textbf{Generation-Representation Scaling Law} (\textbf{GRSL}), indicating that multimodal representational capabilities gained through contrastive refinement scales positively with the MLLM's generative capabilities before CL. GRSL suggests that improving the MLLM's initial generative capability—via continued generative pretraining or supervised fine-tuning—is an effective strategy for enhancing its potential in multimodal representations. 
We offer a theoretical explanation for GRSL through a PAC-Bayesian generalization bound, showing that an MLLM's generative capability determines an upper bound for its representational potential. To empirically validate this, we introduce \textbf{SeaDoc}, the most difficult visual document retrieval task to date in low-resource Southeast Asian languages. Through continual OCR-intense pretraining in low-resource languages, we show that retrieval performance enhances after the same amount of text-only contrastive learning.

Our contributions are threefold: (1) We propose a language-centric omnimodal representation learning framework, achieving promising performance across various MLLM backbones and embedding benchmarks. (2) We identify a Generation-Representation Scaling Law (GRSL), that representational capabilities after CL scales positively with the MLLM's generative capabilities. (3) We provide a theoretical justification for GRSL, followed by comprehensive empirical studies, demonstrating that generative capability sets a fundamental upper bound on representational quality in MLLMs.

% ==============================
\section{Latent Cross-Modal Alignment in MLLMs}
\label{sec:latent_alignment}
% ==============================

In this section, we conduct an in-depth empirical analysis of multimodal large language models (MLLMs) to investigate \textit{whether their internal representations exhibit latent cross-modal alignment} through two geometric properties, \ie degree of anisotropy~\citep{godey2024anisotropy} and kernel-level similarity~\citep{huh2024platonic}. Specifically, starting with an MLLM, we directly take out its text decoder \cite{jiang2024e5v}, \ie the LLM, and fine-tune it using text-only contrastive learning with LoRA on anchor-entailment text pairs from NLI datasets. Then, we merge the trained LoRA weights into the LLM and re-plug it into the original MLLM architecture. The detailed experimental settings are summarized in Section~\ref{ssec:exp_setting}.

% ==============================
\subsection{Analysis of Anisotropy Degrees}
\label{ssec:anisotropy}
% ==============================
Language models trained on self-supervised objectives are known to suffer from anisotropy~\citep{ethayarajh2019contextual,gao2019representation}, an embedding degeneration issue characterized by hidden representations collapsing into a confined region of representation space, resulting in high expected cosine similarity between random inputs. Contrastive learning is known to have the uniformity promise \cite{wang2020understanding,xiao2023isotropy} through enhancing discriminability across random negative pairs. Here, we employ contrastive learning to fine-tune multimodal large language models (MLLMs) exclusively with paired text data. We then compare the behaviors of models before and after fine-tuning to assess whether text-only training can effectively mitigate anisotropy for non-textual modalities, even in the absence of explicit multimodal training. The successful transfer of improvements across modalities would provide empirical evidence that MLLMs inherently preserve geometrically aligned latent spaces among different modalities.

We follow \citet{ethayarajh2019contextual} and \citet{xiao2023isotropy} to approximate the degree of anisotropy using the expected mean of cosine similarity between random data points. Let $\mathbf{h}_i, \mathbf{h}_j \sim \mathcal{D}$ be the embedding vectors sampled independently and identically distributed (\emph{i.i.d.}) from the empirical distribution $\mathcal{D}$ of the representation space. Then, the degree of anisotropy is calculated as:
\[
\text{Anisotropy} := \mathbb{E}_{\mathbf{h}_i, \mathbf{h}_j \sim \mathcal{D}} \left[ \cos(\theta_{ij}) \right] = \mathbb{E}_{\mathbf{h}_i, \mathbf{h}_j \sim \mathcal{D}} \left[ \frac{\mathbf{h}_i^\top \mathbf{h}_j}{\|\mathbf{h}_i\| \, \|\mathbf{h}_j\|} \right].
\tag{1}
\]
In practice, we approximate it empirically using a finite sample of $N$ embeddings $\{\mathbf{h}_1, \dots, \mathbf{h}_N\}$ as:
\[
\hat{\mathbb{E}} \left[ \cos(\theta) \right] = \frac{2}{N(N-1)} \sum_{1 \leq i < j \leq N} \frac{\mathbf{h}_i^\top \mathbf{h}_j}{\|\mathbf{h}_i\| \, \|\mathbf{h}_j\|}.
\tag{2}
\]
Specifically, we use Qwen2.5-Omni-3B~\cite{xu2025qwen25omni} as the backbone model and fine-tune it with text-only contrastive learning. To ensure objective and fair semantic comparison between text and other modalities, we utilize paired datasets, \emph{i.e.}, Pixmo Cap~\citep{deitke2024molmo} for image-text, AudioCaps~\citep{kim2019audiocaps} for audio-text, and MSR-VTT~\citep{xu2016msrvtt} for video-text, for anisotropy comparison. The changes in the embedding spaces of different modalities after the text-only contrastive learning are depicted in Figure~\ref{fig:anisotropy}. As anticipated, the embedding space produced by Qwen2.5-Omni-3B initially exhibits a collapsed structure and poorly separated distribution across modalities. After text-only contrastive learning, embedding spaces of non-text modalities surprisingly generalize to become \textit{more isotropic, dispersing more uniformly across the respective subspaces}. \textbf{The generalized reduction in anisotropy for image, audio, and video embeddings reflects an underlying latent semantic alignment with textual representations within the base model.}

\begin{figure}[t]
    \centering
    \begin{subfigure}[t]{0.32\textwidth}
        \centering
        \includegraphics[width=\textwidth]{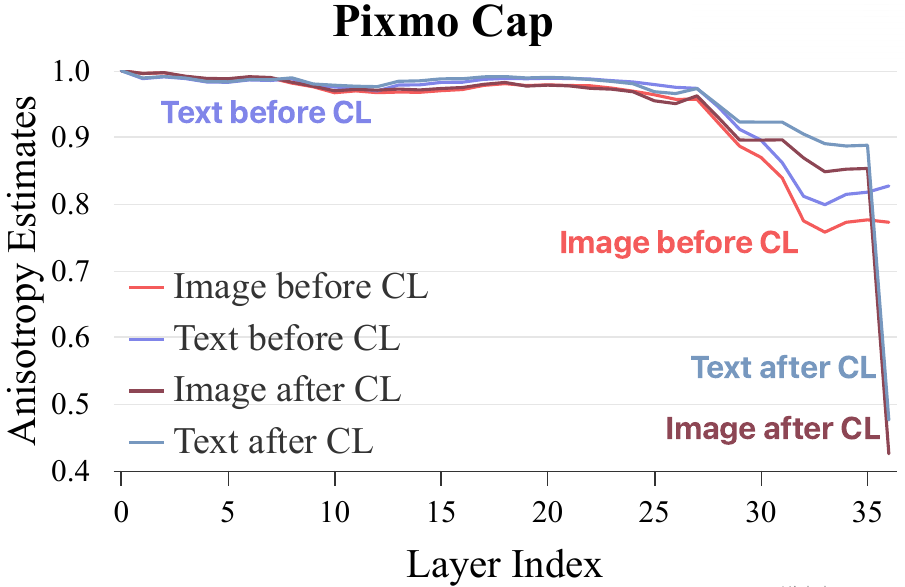}
    \end{subfigure}
    \hfill
    \begin{subfigure}[t]{0.32\textwidth}
        \centering
        \includegraphics[width=\textwidth]{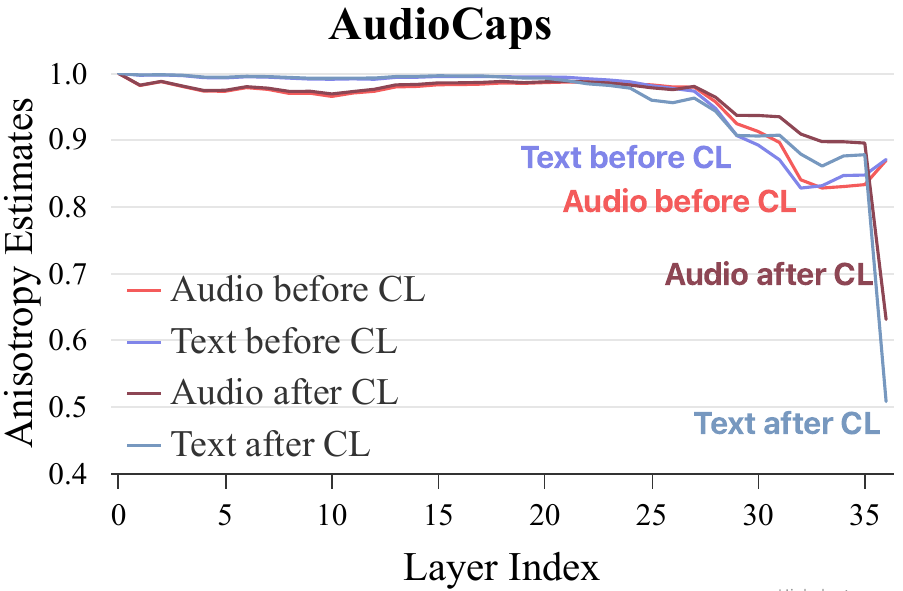}
    \end{subfigure}
    \hfill
    \begin{subfigure}[t]{0.32\textwidth}
        \centering
        \includegraphics[width=\textwidth]{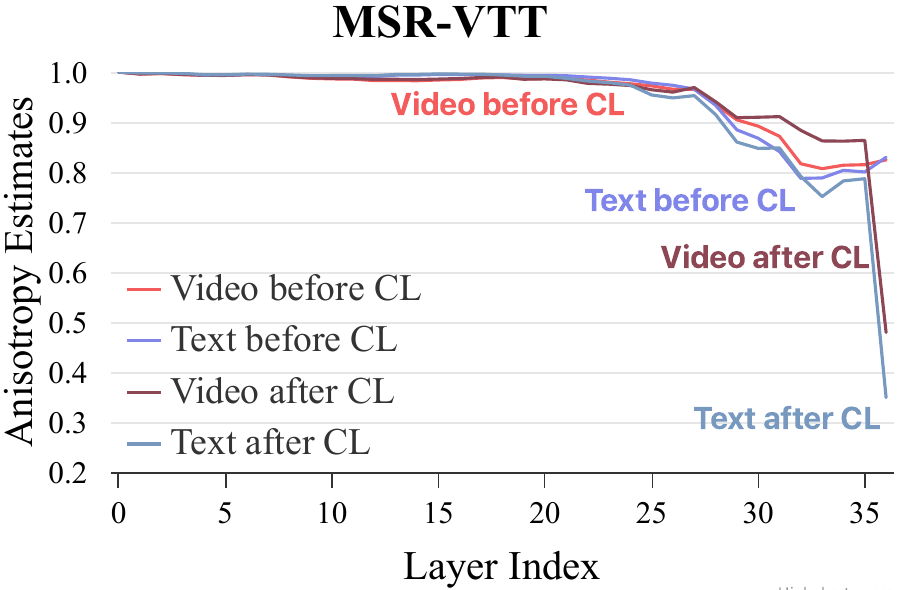}
    \end{subfigure}
    \caption{The anisotropy estimates of Qwen2.5-Omni-3B embeddings across text, image, audio, and video modalities. The vanilla model exhibits typical representation degeneration (anisotropy) for all modalities. After applying \textbf{text-only} contrastive learning, embeddings across modalities become more isotropic, indicating latent \textbf{language-centric} cross-modal alignment within the model.}
    \label{fig:anisotropy}
\end{figure}

% ==============================
\subsection{Analysis of Kernel-level Similarity}
\label{ssec:kernel_similarity}
% ==============================
Building on the identified latent cross-modal alignment in MLLMs, we further employ kernel-level similarity to analyze the improvement in similarity structure alignment across modalities after fine-tuning. Given a function $f: \mathcal{X} \rightarrow \mathbb{R}^n$ that maps inputs to high-dimensional representations, the associated kernel $K: \mathcal{X} \times \mathcal{X} \rightarrow \mathbb{R}$ characterizes the induced similarity structure via inner product $K(x_i, x_j) = \langle f(x_i), f(x_j) \rangle$, where $x_i, x_j \in \mathcal{X}$ and $K \in \mathcal{K}$. Then, a kernel alignment metric $m: \mathcal{K} \times \mathcal{K} \rightarrow \mathbb{R}$ is adopted to quantify the similarity between two kernels, \ie the ``similarity of similarity structures'', by assessing \textit{how closely the distance metric induced by one representation space aligns with that of another}. Prior work~\citep{huh2024platonic} examines these structures across independently trained models and finds convergence in their representations. For instance, despite being trained separately, LLaMA~\citep{touvron2023llama} and DINOv2~\citep{oquab2024dinov} exhibit comparable similarity perception of captions and images from paired datasets.

Similar to \citet{huh2024platonic}, we adopt mutual $k$NN to quantify the overlap in the top-$k$ nearest neighbors of each data point shared across the similarity structures induced by two representation models, $f$ and $g$. 
Specifically, the data samples $(x_i, y_i)$ are drawn in mini-batches of size $b$ from a distribution $\mathcal{X}$. Taking the image-text alignment as an example, each $(x_i, y_i)$ pair, \emph{i.e.}, an image and its corresponding caption, is assumed to share the same semantic content, denoted as $x_i \triangleq y_i$. These paired samples serve as semantic anchors for evaluating the representations across modalities.   
Given the models $f$ and $g$,\footnote{In this example, we use the same model to encode the image $x_i$ and its caption $y_i$, \emph{i.e.}, $f=g$.} the corresponding embeddings of each paired sample are attained as $\phi_i = f(x_i)$ and $\psi_i = g(y_i)$. For a mini-batch of $b$ data samples, we can derive the feature sets $\Phi = \{\phi_1, \ldots, \phi_b\}$ and $\Psi = \{\psi_1, \ldots, \psi_b\}$. For each feature $\phi_i \in \Phi$ (and similarly $\psi_i \in \Psi$), the $k$NN set $\mathcal{S}(\phi_i)$ (or $\mathcal{S}(\psi_i)$) comprises the indices of its $k$ nearest neighbors within its feature collection, excluding itself, which is determined by $d_{\text{knn}}$ as:
\begin{alignat}{2}
\mathcal{S}(\phi_i) &= d_{\text{knn}}(\phi_i, \Phi \setminus \{\phi_i\}),\qquad
& \mathcal{S}(\psi_i) &= d_{\text{knn}}(\psi_i, \Psi \setminus \{\psi_i\}). 
\tag{3}
\end{alignat}
The kernel-level similarity score $m_{\text{NN}}$ for a specific feature pair $(\phi_i, \psi_i)$ is the normalized cardinality of the intersection of their $k$NN sets:
\begin{equation}
\label{eq:mnn}
m_{\text{NN}}(\phi_i, \psi_i) = \frac{1}{k} |\mathcal{S}(\phi_i) \cap \mathcal{S}(\psi_i)|,
\tag{4}
\end{equation}
which indicates the proportion of shared nearest neighbors, where $k$ denotes the number of nearest neighbors. The overall $m_{\text{NN}}$ metric is computed as the average of the individual $m_{\text{NN}}(\phi_i, \psi_i)$ scores across the mini-batch.

\begin{wrapfigure}{r}{0.5\textwidth}
    \centering
    \vspace{-4pt}    \includegraphics[width=1\linewidth]{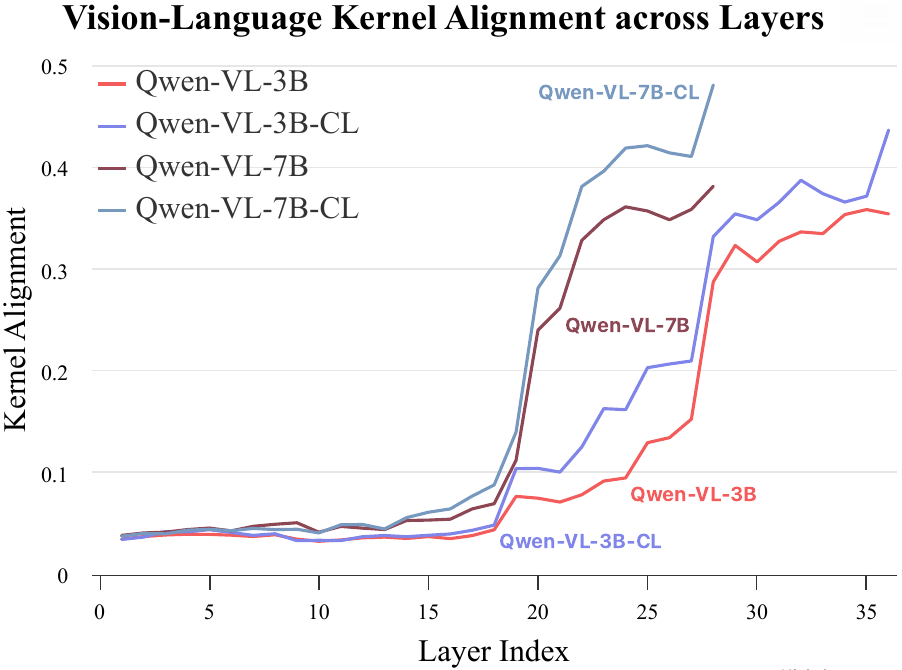}
    \caption{Layer-wise vision-language kernel alignment before and after text-only contrastive learning, evaluated on Qwen-VL models with 7B (28 layers) and 3B (36 layers) parameters. Note the 3B model has more layers than the 7B model.}
    \vspace{-5pt}
    \label{fig:kernel alignment}
\end{wrapfigure}

Different from \citet{huh2024platonic}, we utilize kernel alignment metrics to inspect cross-modal alignment within the same model. Specifically, we attain hidden representations at all layers from the 3B and 7B variants of Qwen2.5-VL-Instruct, and assess the self-similarity between the vision and language kernels using Equation~\ref{eq:mnn}. The text-only contrastive learning is applied to the language decoder, and alignment scores are compared before and after fine-tuning. As illustrated in Figure~\ref{fig:kernel alignment}, two notable findings emerge: (1) cross-modal kernel alignment improves after text-only contrastive learning, indicating the presence of inherent latent alignment across modalities; and (2) the 7B variant exhibits consistently stronger cross-modal kernel alignment than the 3B variant, both before and after fine-tuning. This advantage may be attributed to the expanded parameter space of the larger model, yielding better expressivity and a superior ability to capture latent cross-modal relationships during pre-training. Collectively, these findings suggest that \textbf{inherent cross-modal binding enables the optimization of representation in one modality to induce corresponding improvements in other modalities}.

% ==============================
\section{Language-centric Omnimodal Representation Learning}
\label{sec:omnimodal_training}
% ==============================
The preliminary experiments reveal that MLLMs implicitly acquire cross-modal alignment during pretraining. Although initial embeddings are suboptimal for similarity matching, latent alignment emerges in intermediate layers. This inherent alignment can be efficiently unlocked through lightweight text-only contrastive fine-tuning, enhancing representation quality across both textual and non-textual modalities. Building on this insight, we introduce \underline{\textbf{L}}anguage-\underline{\textbf{C}}entric \underline{\textbf{O}}mnimodal representation learning (\ours), a framework that leverages language-centric data and lightweight contrastive learning to boost MLLM representation capabilities across modalities.

The contemporary architectures of MLLM are composed of modality-specific encoders, a projector, and a language decoder (\ie an LLM), with the projector aligning modality-specific representations to the decoder's embedding space~\citep{liu2023llava,bai2025qwen25vl,xu2025qwen25omni,kimi2025kimivl}. For text-only variants of \ours, we \textit{isolate and fine-tune only the language decoder via text-only contrastive learning}, while freezing the parameters of modality encoders and the projector. After training, the updated decoder is re-plugged into the original model. We further incorporate minimal multimodal paired data to calibrate the embedding space for downstream tasks, resulting in multimodal variants of \ours.

Central to our method is the preservation of the latent cross-modal alignment established during generative pretraining. This alignment, wherein multimodal embeddings are integrated into a shared latent subspace by the language decoder, is fundamental to the model's multimodal representation capability. We employ LoRA~\citep{hu2022lora}, which introduces low-rank trainable parameters into select layers while freezing the original model. While LoRA is widely recognized for enabling parameter-efficient fine-tuning, its primary advantage in our context is its ability to minimally perturb the original model. This approach yields two critical benefits: (1) \textit{it preserves the model's generative capabilities through minimal weight modifications}~\citep{biderman2024loralearnsforgets}; and (2) \textit{it maintains the latent cross-modal alignment, especially in the language decoder's embedding layer, which is unaffected by the adaptation}.

\begin{figure}
    \centering
    \includegraphics[trim={0.8cm 0.3cm 0.1cm 0.4cm},clip,width=\columnwidth]{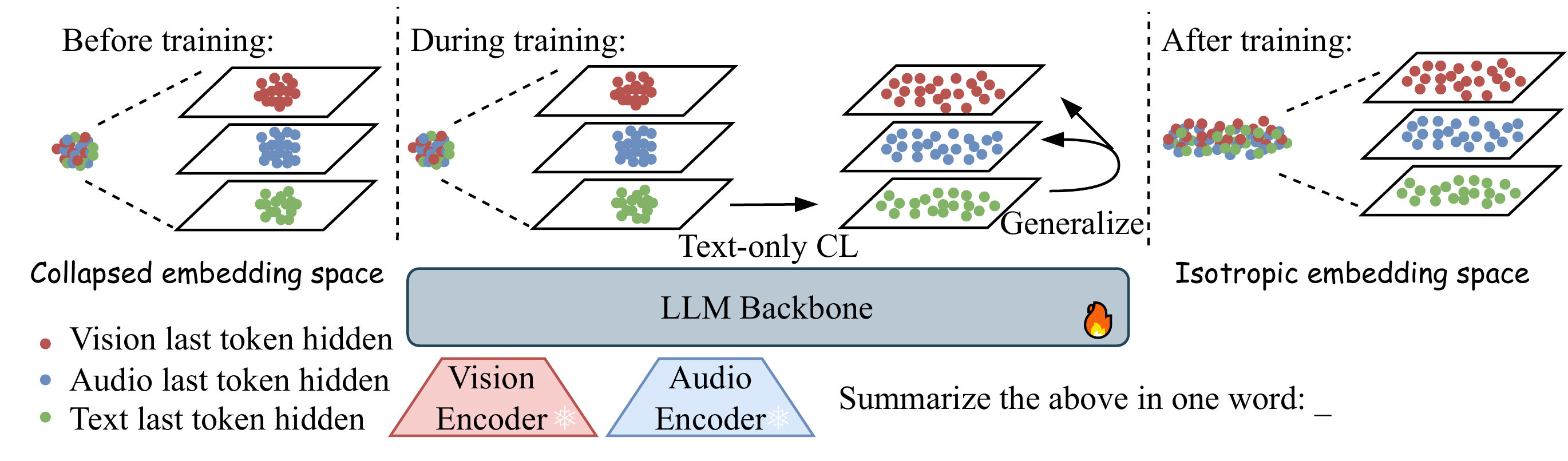}
    \caption{The power of language-centric omnimodal representation learning: Before text-only contrastive learning (CL), representations across modalities in multimodal large language models (MLLMs) exhibit anisotropy, collapsing into a confined subspace. Text-only CL disperses textual representations by increasing their separation, effectively reducing anisotropy. Notably, this process generalizes to alleviate anisotropy in non-textual modalities, despite the absence of direct supervision.}
    \label{fig:enter-label}
\end{figure}

% =========================
\section{Experiments}
% =========================

% =========================
\subsection{Experimental Settings}\label{ssec:exp_setting}
% =========================
\textbf{Backbones and hyperparameters.}\;\;
We use LLaVA-Next~\citep{liu2024llavanext}, Qwen2.5-VL~\citep{bai2025qwen25vl}, and Qwen2.5-Omni~\citep{xu2025qwen25omni} as backbone models, all conforming to the standard architecture of modality-specific encoders, a projector, and a language decoder. LLaVA-Next and Qwen2.5-VL focus on image/video-text modalities, while Qwen2.5-Omni supports omnimodal inputs, covering text, image, video, and audio. We utilize the 8B variant of LLaVA-Next, the 3B and 7B variants for both Qwen2.5-VL and Qwen2.5-Omni. We adopt the AdamW optimizer with a cosine learning rate schedule, a peak learning rate of $4 \times 10^{-4}$, and a batch size of $768$,\footnote{For multimodal variants with limited additional image-text and interleaved data, we scale the batch size by the ratio of total to text-only dataset size. For example, for our 370k dataset (276k text-only), the batch size is $1,052$, \textit{i.e.}, $1.37\times768$.} to train the model for $2$ epochs. The default LoRA rank ($r$) and $\alpha$ are set as $64$ and $16$ for text-only variants and $64$ and $128$ for multimodal variants respectively. For multimodal variants of Qwen2.5-Omni-7B, we use a reduced learning rate of $3 \times 10^{-4}$ due to the loss spike.

\textbf{Training datasets.}\;\;
\textit{(1) Text-only Setting.} We consider two dataset settings: \textbf{all-NLI} and \textbf{Scale-1M}. The \textbf{all-NLI} dataset combines MNLI~\citep{williams2018broad} and SNLI~\citep{bowman2015large}, both frequently used for sentence representation learning. Each instance includes a premise with three hypotheses (entailment, neutral, contradiction). We use $\sim$276k triplets from all-NLI using entailments as positives and contradictions as hard negatives. We further construct \textbf{Scale-1M}, a curated collection of 1M sentence pairs sampled from 20M multilingual parallel corpora, including Global Voice~\citep{nguyen2019global}, MUSE~\citep{sachidananda2021filtered}, News Commentary~\citep{Tiedemann2012NewsCommentary}, Tatoeba~\citep{artetxe2019massively}, Talks~\citep{reimers2020making}, WikiMatrix~\citep{schwenk2021wikimatrix}, and other Sentence Transformers sources~\citep{reimers2019sentencebertsentenceembeddingsusing}. This design simultaneously leverages diverse descriptive text to simulate image captions—aiming to activate image representations without direct image supervision—and integrates multilingual pairs to enhance cross-lingual alignment, which may in turn enhance multimodal alignment across languages.
\textit{(2) Multimodal Setting.} Building on all-NLI, we further add $\sim$94k synthetic multimodal pairs (ref. Appendix~\ref{apdx:syn_multimodal}) to enhance alignment in the downstream task format space, yielding a final dataset of $\sim$370k triplets.

\textbf{Evaluation benchmarks.}\;\;
For \textit{image-text embedding tasks}, we primarily adopt \textbf{MIEB-Lite} (51 tasks)—the official lightweight version of MIEB (130 tasks; \citep{xiao2025mieb})—covering eight categories detailed in Appendix~\ref{apdx:mieb_lite}, including Linear Probing, Retrieval (English and Multilingual), Zero-shot Classification, Compositionality Evaluation, Vision-centric QA, Document Understanding, Clustering, and Visual STS (English and Cross-lingual). For rapid iteration and ablation, we further employ a compact subset of 18 overlapping tasks (referred to as \textbf{MIEB-Sub18}; detailed in Appendix~\ref{apdx:sub_benchmark}). 
For \textit{audio-text embedding tasks}, we evaluate on AudioCaps~\citep{kim2019audiocaps} and Clotho~\citep{drossos2019clotho} datasets. For \textit{video-text embedding tasks}, we utilize MSR-VTT~\citep{xu2016msrvtt} and ActivityNet~\citep{heilbron2015activitynet} datasets. The performance on these tasks provides complementary evidence supporting the universality and effectiveness of \ours, extending beyond the vision and language modalities. For both audio-text and video-text embedding tasks, we utilize the \textit{Recall@1} as the evaluation metric.

% =========================
\subsection{Performance Comparison on the MIEB Benchmark}
\label{sec: mieb-performance}
% =========================

\begin{figure}[t]
    \centering
    \begin{subfigure}[t]{0.48\textwidth}
        \centering
        \includegraphics[width=\textwidth]{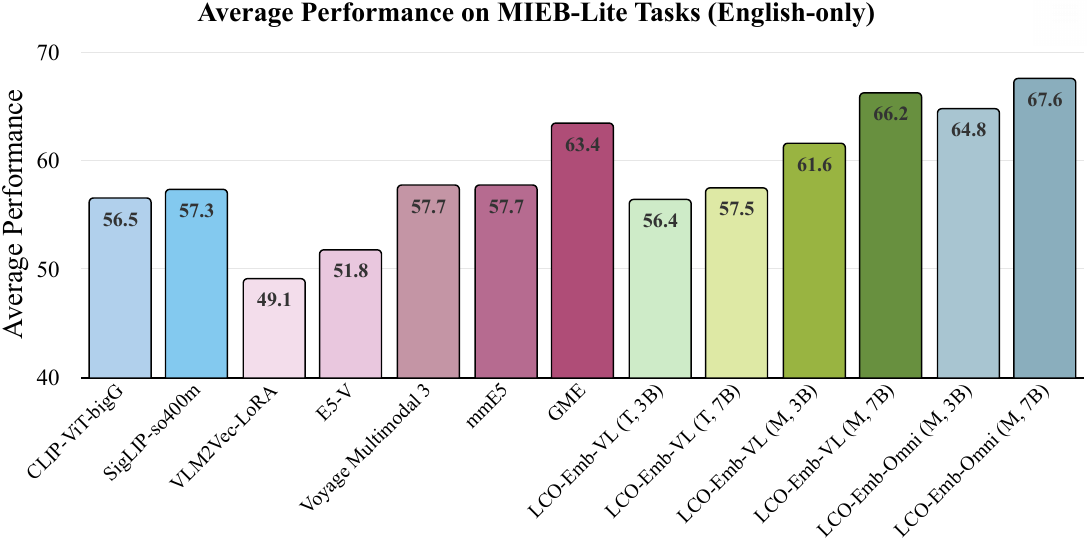}
    \end{subfigure}
    \hfill
    \begin{subfigure}[t]{0.48\textwidth}
        \centering
        \includegraphics[width=\textwidth]{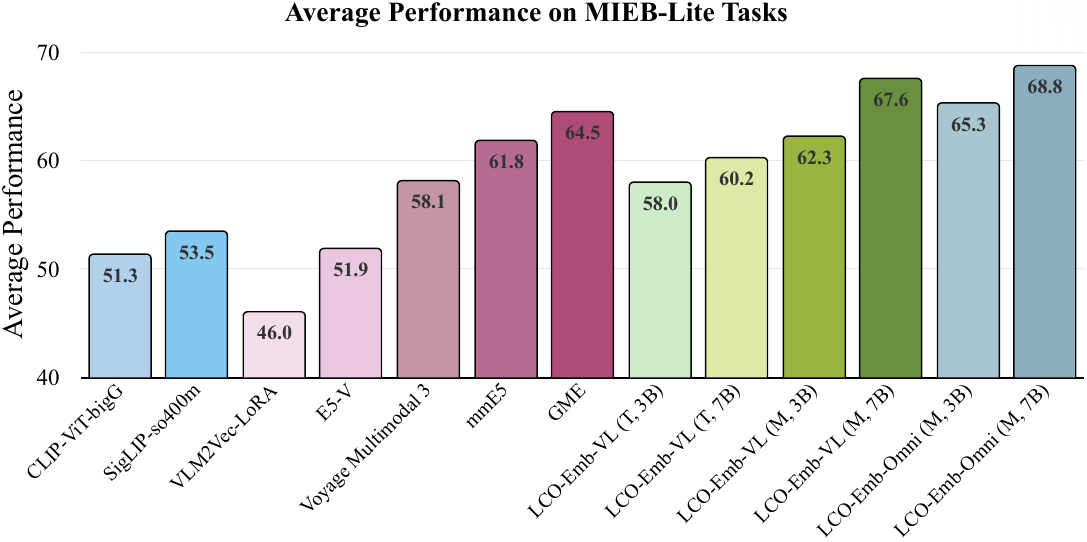}
    \end{subfigure}
    \caption{Performance comparison of \ours against the state-of-the-art open-source and proprietary embedding models, where we visualize the average performance of MIEB-Lite and its English-only subsets. \ours-VL and \ours-Omni denotes \ours trained from the Qwen2.5-VL and Qwen2.5-Omni backbones, respectively, while ``T'' and ``M'' represent the \textit{text-only} and \textit{multimodal} variants of \ours, respectively.}
    \label{fig:mieb-lite-fig}
\end{figure}

\begin{table*}[t]
\centering
\caption{\textbf{MIEB-Lite (51 tasks) results broken down by task categories.} We provide averages of both English and multilingual tasks. Models are ranked by the Mean (m) column. Shortcuts are x=``Crosslingual'', m=``Multilingual'', en=``English'', and task categories from MIEB~\citep{xiao2025mieb}. We refer to the latest MIEB leaderboard to obtain scores for the compared baselines.}
\label{tab: overall results top 20.}
\scriptsize
\resizebox{\linewidth}{!}{
\begin{tabular}
{lc|cccccccc|cc|ccc}
\toprule
\rowcolor{our_green}
\multicolumn{14}{c}
{\textbf{MIEB-Lite (51 Tasks)}}\\
\midrule
\multirow{3}{*}{\textbf{Model Name ($\downarrow$)}} & \multirow{2}{*}{\textbf{Dataset}} & \multirow{2}{*}{\textbf{Rtrv.}} &\multirow{2}{*}{\textbf{Clus.}} &\multirow{2}{*}{\textbf{ZS.}}&\multirow{2}{*}{\textbf{LP.}} &\multirow{2}{*}{\textbf{Cmp.}} & \multirow{2}{*}{\textbf{VC.}} &\multirow{2}{*}{\textbf{Doc.}} &\textbf{vSTS} &\textbf{Rtrv.} &\textbf{vSTS} &\textbf{Mean} &\textbf{Mean} \\
&\multirow{2}{*}{\textbf{Size}}&&&&&&&&\textbf{(en)}&\textbf{(m)}&\textbf{(x$\&$m)}&\textbf{(en)}&\textbf{(m)}\\
&&\textbf{(11)}&\textbf{(2)}&\textbf{(7)}&\textbf{(8)}&\textbf{(6)}&\textbf{(5)}&\textbf{(6)}&\textbf{(2)}&\textbf{(2 (44))}&\textbf{(2 (19))}&\textbf{(47)}&\textbf{(51)}\\
\midrule
\textbf{\textit{Encoder baselines}} \\
CLIP-ViT-bigG~\citep{CLIP-ViT-bigG-14-laion2B} & 2B & 34.2 &80.8 & 72.4 & 77.8 &35.0 &43.0 &35.5 &73.4 &26.2 &34.5 &56.5 &51.3 \\
SigLIP-so400m~\citep{zhai2023siglip} & 9B &32.4 &75.9 &73.8 & 78.8 &32.8 &48.0 &46.9 &69.6 &35.4 &41.4 & 57.3  & 53.5 \\
\midrule
\textbf{\textit{MLLM baselines}} \\
VLM2Vec-LoRA~\citep{jiang2025vlmvec} & 662k &21.0&	66.4&	32.1&	64.8&	29.4& 65.3 &	42.7&	70.9&	24.8&	42.2&	49.1&	46.0\\
E5-V~\citep{jiang2024e5v} & 276k &26.9 &51.7 &36.2 &70.6 &39.4 &52.6 &56.0 &81.2 &58.3 &46.3 &51.8 &51.9 \\
Voyage Multimodal 3~\citep{voyage3} & - &33.2 &76.6 &48.6 &69.3 &35.8 &50.0 &63.5 &84.2 &49.0 &70.4 &57.7 &58.1 \\
mmE5 (11B) \citep{chen2025mme5} & 2.1M &34.2	&77.0	&59.8	&71.1	&27.8	&59.2	&53.9	&78.8	&66.6	&54.6	&57.7	&61.8\\
GME (7B) \citep{zhang2024gme} & 8.0M &37.9	&69.6	&55.5	&68.7	&52.2	&55.4	&86.1	&81.8	&62.4	&75.4	&63.4	&64.5\\
\midrule
\textbf{Our \textit{Text-only Variants}} \\
\ours-VL (3B)    & 276k & 32.6 &61.8 &45.0&67.4&38.5&57.7&62.2&86.1&52.4&76.5&56.4 & 58.0\\
\ours-VL (7B)   & 276k &31.8&52.7&49.1&68.5&40.4&63.1&66.0&88.4&59.8&84.3&57.5 &60.4\\
\textbf{Our \textit{Multimodal Variants}} \\
\ours-VL (3B)    & 370k &34.0&	71.6&	58.1&	68.3&	46.1&	57.8&	73.0&	83.8&	54.6&	76.1&	61.6&	62.3\\
\ours-VL (7B)   & 370k &36.4
&76.0	&66.8	&72.5	&51.0	&65.2	&75.6	&86.6	&63.1	&83.3	&\underline{66.2}	&\underline{67.6}\\
\ours-Omni (3B)	& 370k &34.5	&78.3	&66.1	&72.8	&48.2	&59.2	&73.4	&85.7	&54.5	&80.4	&64.8	&65.3 \\
\ours-Omni (7B)	& 370k &36.4	&80.0	&68.5	&74.1	&50.1	&70.5	&75.4	&86.2	&64.3	&82.4	&\textbf{67.6}	& \textbf{68.8} \\
\bottomrule
\end{tabular}
}
\label{tab: full MIEB}
\end{table*}

To better understand the experimental results, we briefly introduce the goal and evaluation metric of each MIEB-Lite category: (1) \textbf{Visual STS} reformulates semantic textual similarity as a vision task by rendering text as images to test visual encoders' semantic understanding, evaluated by \textit{Spearman correlation}; (2) \textbf{Document Understanding/Visual Document Retrieval} measures a model's ability to capture layout-aware textual semantics in visual documents and image-text alignment, evaluated by \textit{nDCG@5}; (3) \textbf{Image Linear Probing} assesses the discriminative and transferable quality of frozen visual representations using \textit{accuracy}; (4) \textbf{Compositionality Evaluation} tests fine-grained image-text alignment with \textit{accuracy}; (5) \textbf{Vision-centric QA} evaluates visual reasoning and understanding through \textit{accuracy}; (6) \textbf{Retrieval} measures modality-specific and joint encoding performance with \textit{nDCG@10}; (7) \textbf{Zero-shot Classification} evaluates similarity-based classification using \textit{accuracy}; and (8) \textbf{Clustering} examines the structural coherence of embeddings using the \textit{NMI} metric. We refer to Appendix~\ref{apdx:mieb_lite} for detailed task descriptions.

We evaluate \ours on the 51 tasks of the MIEB-Lite benchmark. As shown in Figure~\ref{fig:mieb-lite-fig} and Table~\ref{tab: full MIEB}, \ours consistently outperforms strong baselines, including E5-V~\citep{jiang2024e5v}, VLM2Vec~\citep{jiang2025vlmvec}, Voyage-Multimodal-3~\citep{voyage3}, mmE5~\citep{chen2025mme5}, and GME~\citep{zhang2024gme}. Remarkably, despite using only $\sim$0.37M training pairs—about $21\times$ less data than GME ($\sim$8M)—our multimodal variants set a new state-of-the-art on MIEB. Consistent with findings from \citet{xiao2025mieb}, MLLM-based embedding models excel at tasks leveraging MLLM backbones' reasoning and cross-modal understanding abilities, such as multilingual alignment, compositionality, and document understanding. Beyond these strengths, \ours also attains competitive results on clustering, linear probing, and zero-shot classification—areas where MLLM-based representations typically lag behind CLIP-style models. Notably, even our text-only variants, trained with minimal text-only contrastive data, surpass advanced proprietary model Voyage-Multimodal-3. Incorporating only $\sim$94k additional multimodal samples (image–text and interleaved data; Appendix~\ref{apdx:syn_multimodal}) further calibrates the representation space for downstream task formats, resulting in a compact yet highly effective dataset of $\sim$370k triplets.

\begin{figure}[t]
    \centering
    \begin{subfigure}[t]{0.32\textwidth}
        \centering
        \includegraphics[width=\textwidth]{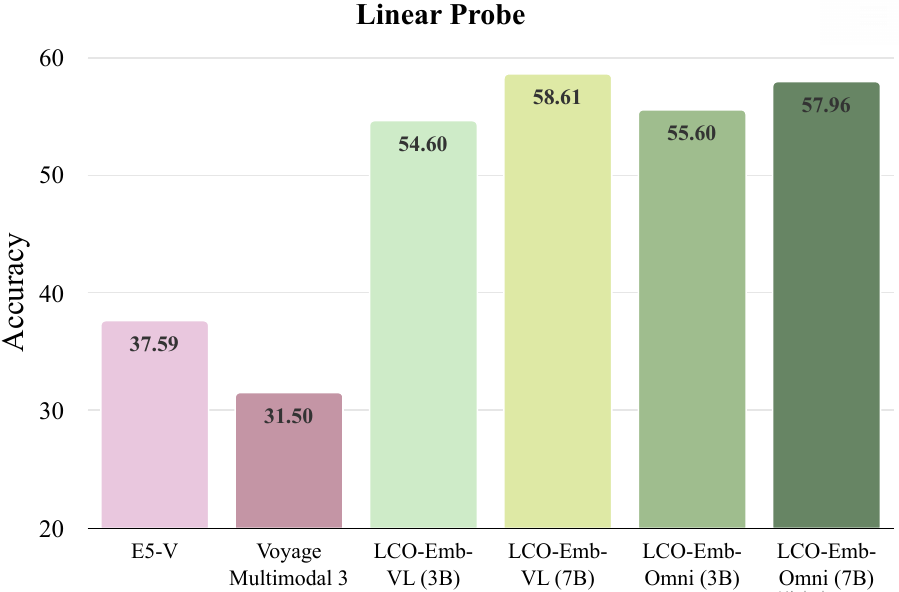}
    \end{subfigure}
    \hfill
    \begin{subfigure}[t]{0.32\textwidth}
        \centering
        \includegraphics[width=\textwidth]{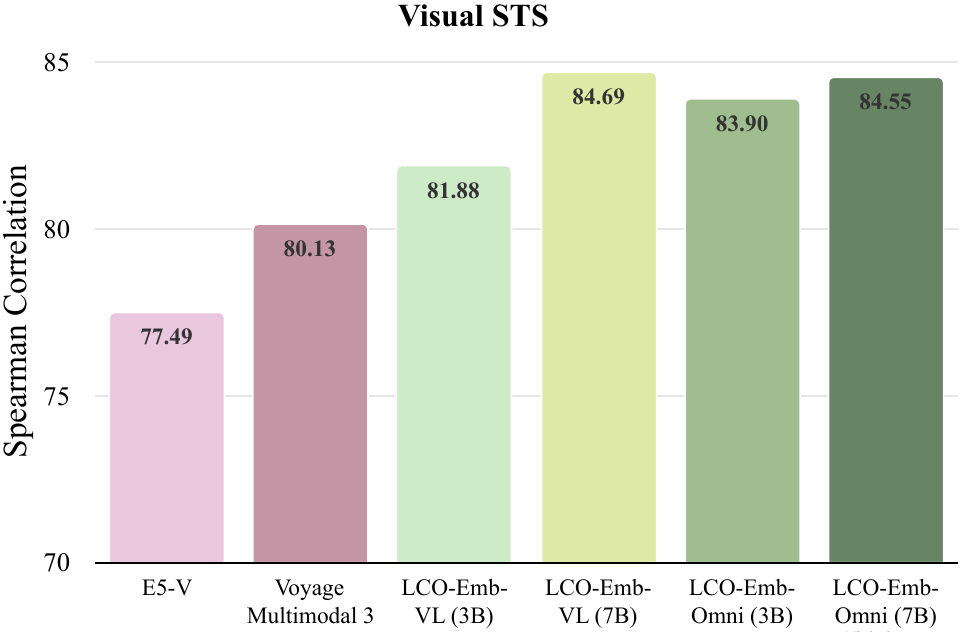}
    \end{subfigure}
    \hfill
    \begin{subfigure}[t]{0.32\textwidth}
        \centering
        \includegraphics[width=\textwidth]{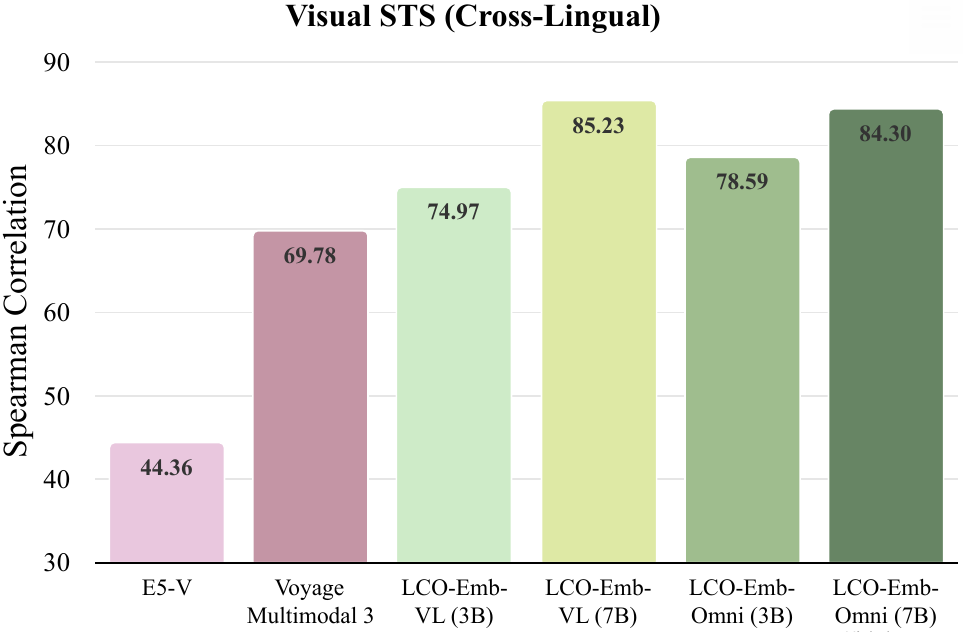}
    \end{subfigure}
    \begin{subfigure}[t]{0.32\textwidth}
        \centering
        \includegraphics[width=\textwidth]{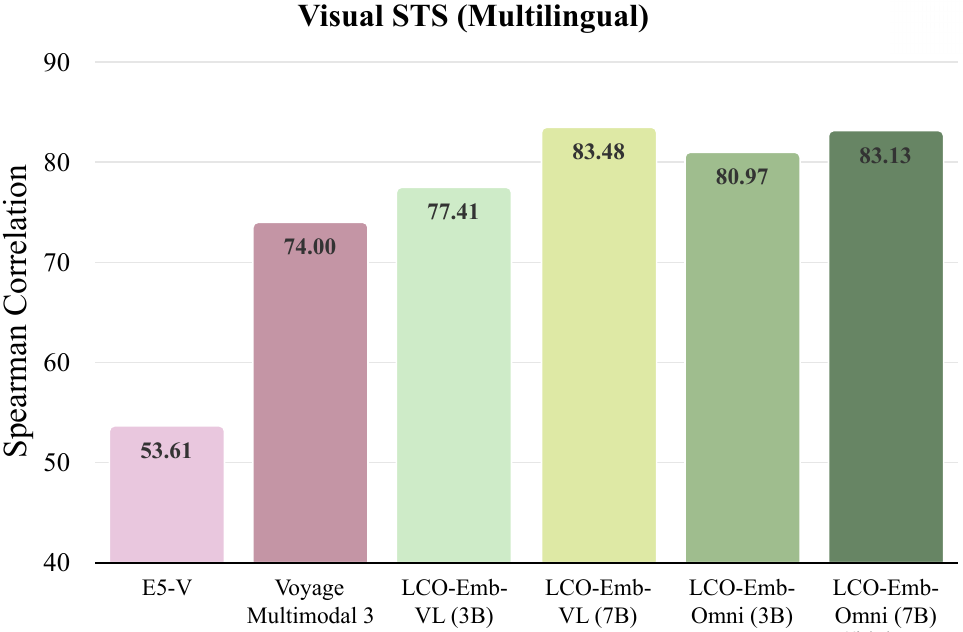}
    \end{subfigure}
    \hfill
    \begin{subfigure}[t]{0.32\textwidth}
        \centering
        \includegraphics[width=\textwidth]{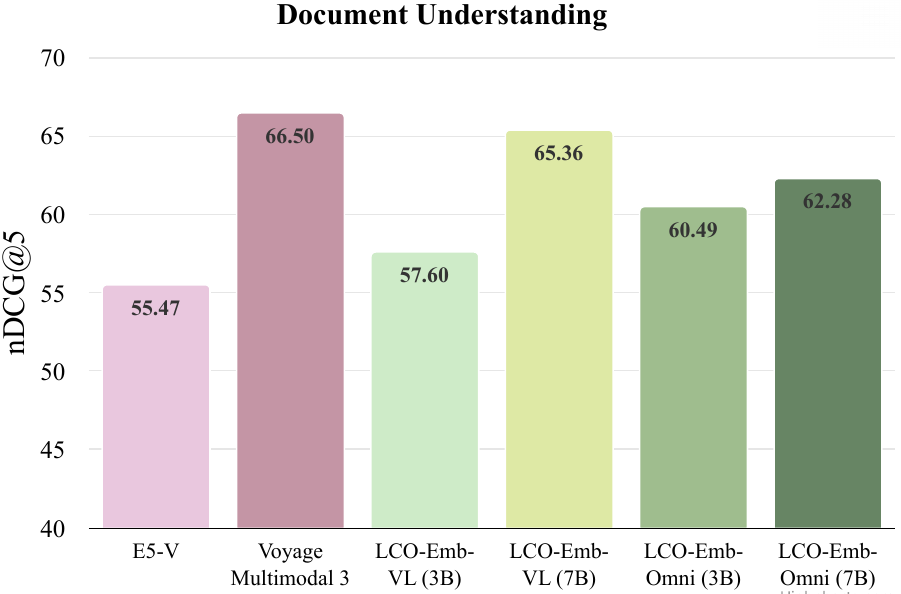}
    \end{subfigure}
    \hfill
    \begin{subfigure}[t]{0.32\textwidth}
        \centering
        \includegraphics[width=\textwidth]{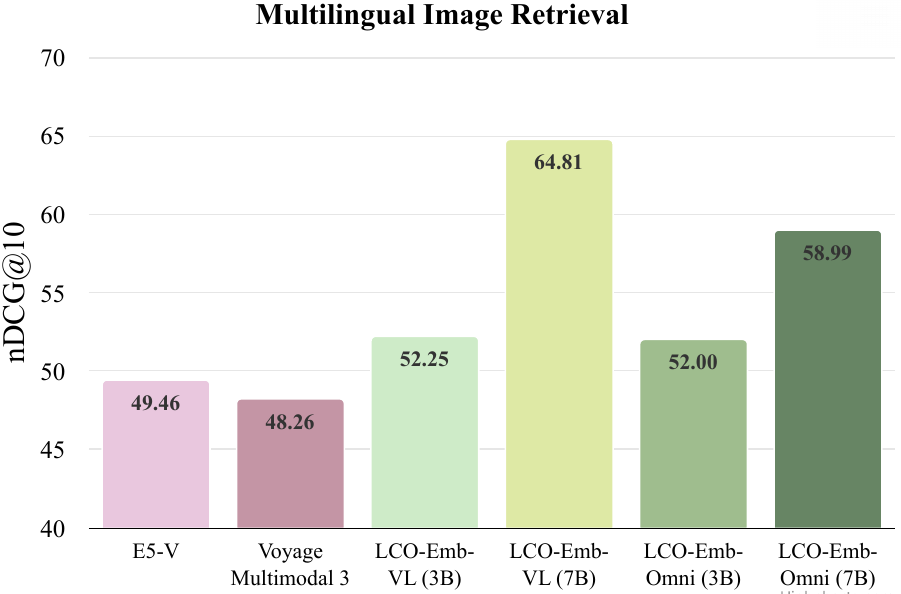}
    \end{subfigure}
    \caption{Ablation comparison between the \textbf{text-only} variants of \ours with advanced open-source (E5-V~\citep{jiang2024e5v}) and proprietary (Voyage Multimodal 3~\citep{voyage3}) embedding models on \textbf{MIEB-Sub18}. \ours-VL and \ours-Omni denote \ours trained from Qwen2.5-VL and Qwen2.5-Omni backbones, respectively.}
    \label{fig:results fig}
\end{figure}

% =========================
\subsection{Representational Capability of \ours}
% =========================
To better analyze the representational capability of \ours, we use the text-only variants, \textit{i.e.}, without utilizing the synthetic multimodal pairs, as evaluation targets and conduct extensive validation and ablation studies on \textbf{MIEB-Sub18} benchmark.

\textbf{Main results.}\;\;
We assess text-only variants of \ours against the advanced embedding methods on the MIEB-Sub18. As illustrated in Figure~\ref{fig:results fig}, \ours, trained on the 3B and 7B variants of Qwen2.5-VL-Instruct (VL) and Qwen2.5-Omni (Omni), consistently outperforms the leading embedding models across a variety of multimodal downstream tasks. On average across all evaluation categories, the text-only variants of \ours have outperformed E5-V~\citep{jiang2024e5v} and Voyage-Multimodal-3~\citep{voyage3} by $21.69$ and $13.00$ points, where E5-V and Voyage-M3 are the advanced open-source and proprietary MLLM embedding models, respectively. Notably, \ours deliver significant improvements on the Linear Probing, Cross-lingual Visual STS, and Multilingual Image Retrieval tasks, outperforming prior advanced methods by margins of $21.02$, $10.26$, and $15.35$ points, respectively. The results highlight the effectiveness and generalizability of \ours. It is also noteworthy that while Voyage-M3 is a commercial model explicitly optimized on PDF–text pairs for document understanding tasks, \ours, trained solely on textual data, still achieve comparable results. 

\begin{table}[t]\centering
\caption{Exploring the impact of training dataset utilization and model ensemble on \ours, where \ours-Ens denotes the ensemble model produced by applying the model soup~\citep{wortsman2022model} technique to \ours variants fine-tuned on all-NLI and Scale-1M.}
\small
\label{tab: model soup}
\resizebox{\linewidth}{!}{
\begin{tabular}{lcccccccc}
\toprule
\multirow{2}{*}{Model} & \multirow{2}{*}{Data Source} & \multirow{2}{*}{Linear Prob.} & \multirow{2}{*}{\makecell{v-STS\\(Eng.)}} & \multirow{2}{*}{\makecell{v-STS\\(cross)}} & \multirow{2}{*}{\makecell{v-STS\\(multi)}} & \multirow{2}{*}{\makecell{Doc.\\Und.}} & \multirow{2}{*}{\makecell{Multi. \\ Img. Rtr.}} & \multirow{2}{*}{Avg.} \\
& & & & & & & & \\
\midrule
\ours & all-NLI &51.86 &\textbf{84.69} &\textbf{85.23} & \textbf{83.48} &\textbf{65.36} &56.37 &71.17 \\
\ours & Scale-1M &\textbf{58.61} &81.27 &81.42 &78.42 &62.12 &\textbf{64.81} &71.11 \\
\midrule
\ours-Ens & - &\underline{55.69} &\underline{83.79} &\underline{84.88} &\underline{82.82} &\underline{63.15} &\underline{62.67} &\textbf{72.17} \\
\bottomrule
\end{tabular}
}
\end{table}

\textbf{Exploring the impact of text-only training dataset and model merging.}\;\;
Recognizing that language models fine-tuned on different datasets often demonstrate distinct strengths, we independently fine-tune \ours, using Qwen-2.5-VL-Instruct as the backbone model, on all-NLI and Scale-1M via contrastive learning, then assess the performance of each variant in isolation. Subsequently, we investigate the effect of model ensembling by applying the model soup~\citep{wortsman2022model} technique, which merges the parameters of multiple separately fine-tuned models by averaging their weights. The results, presented in Table~\ref{tab: model soup}, provide the following three key insights:
\begin{itemize}[leftmargin=*,topsep=-1pt]
\setlength{\itemsep}{2pt}
\setlength{\parskip}{0pt}
\setlength{\parsep}{0pt}
\item \textbf{Performance of all-NLI fine-tuned variant.} \ours trained by all-NLI excels in Visual STS and Document Understanding, indicating that NLI supervision sharpens not only textual similarity perception but also generalizes to improve their ability to preserve vision-text semantic similarity.
\item \textbf{Performance of Scale-1M fine-tuned variant.} \ours adapted by Scale-1M leads on Linear Probing and Multilingual Image Retrieval tasks. Since Scale-1M supplies semantically rich descriptions of real-world scenes, \ours fine-tuned on this corpus appears to emulate image–caption pre-training, thereby activating image representations without explicit visual data.
\item \textbf{Performance of model ensemble.} The \ours-Ens, through merging the \ours variants trained by all-NLI and Scale-1M, achieves the best overall performance, demonstrating that the model ensemble strategy effectively integrates the complementary strengths of each checkpoint.
\end{itemize}

\begin{table}[t]
\centering
\caption{Performance and efficiency comparisons of different training strategies using 3B and 7B variants of Qwen2.5-VL backbones. GPU hours are benchmarked by hours $\times$ number of H20 GPUs.}
\resizebox{\linewidth}{!}{
\begin{tabular}{llccccccc}
\toprule
\multirow{2}{*}{Training Strategy} & Training Time & Multiling. & V-STS & V-STS & V-STS & Doc. & Linear & \multirow{2}{*}{Average}\\
&(GPU Hours)& Img. Rtr & (Eng.) & (cross) & (multi) &  Und.  &  Probe \\
\midrule
Qwen2.5-VL-3B & \texttt{n/a} & 31.73 & 73.82 & 59.03 & 68.57 & 28.82 & 46.96 & 51.49 \\
w/ CLIP-style CL (multimodal) & $\sim$453.0 Hours & 25.15 & 72.51 & 67.45 & 65.22 & 48.91 & 41.05 & 53.38 \\
w/ Linear Proj. (text-only) & $\sim$4.5 Hours & 31.31 & 75.25 & 62.95 & 69.32 & 28.12 & 49.19 & 52.69 \\
w/ Full-Finetune (text-only) & $\sim$8.5 Hours & \underline{44.61} & \underline{81.65} & \underline{68.67} & \underline{77.75} & \underline{49.71} & \underline{50.21} & \underline{62.10} \\
w/ LoRA (text-only) & $\sim$4.7 Hours & \textbf{51.61} & \textbf{81.88} & \textbf{74.97} & \textbf{78.30} & \textbf{57.90} & \textbf{53.05} & \textbf{66.28} \\
\midrule
Qwen2.5-VL-7B & \texttt{n/a} & 40.31 & 73.82 & 59.03 & 68.56 & 28.82 & 46.96  & 52.92 \\
w/ CLIP-style CL (multimodal) & $\sim$550.0 Hours & 18.24 & 73.92 & 68.70 & 65.41 & 44.89 & 38.93 & 50.02\\
w/ Linear Proj. (text-only) & $\sim$8.8 Hours & 40.29 &  72.05 & 65.46 & 70.88 & 35.69 & 52.96 & 56.22\\
w/ Full-Finetune (text-only) & $\sim$17.3 Hours & \underline{44.05} & \underline{83.15} & \underline{79.09} & \underline{81.28} & \underline{58.02} & \underline{53.34} & \underline{66.49}\\
w/ LoRA (text-only) & $\sim$9.3 Hours & \textbf{56.64} & \textbf{85.05} & \textbf{85.30} & \textbf{83.48} & \textbf{67.49} & \textbf{53.91} & \textbf{71.98} \\
\bottomrule
\end{tabular}}
\label{tab:comparison}
\end{table}

\textbf{Comparison of different training strategies.}\;\;
We apply LoRA to enhance representational capacity while preserving latent cross-modal alignment. To assess this design, we experiment with Qwen2.5-VL 3B and 7B backbones, comparing LoRA against three baselines: (1) \textit{standard CLIP-style contrastive fine-tuning} on 800K PixmoCaps image-caption pairs, (2) \textit{full fine-tuning}, and (3) a \textit{shallow projection} that adds a linear layer after the output. Reported in Table~\ref{tab:comparison}, the CLIP-style baseline underperforms text-only LoRA, requires $50\times$ more training time, and the shallow projection increases parameters but does not effectively leverage pretrained cross-modal structure, yielding only marginal gains over native embeddings. Full fine-tuning achieves reasonable results but remains notably inferior to LoRA. We attribute this gap to an objective mismatch: contrastive loss deviates from the model's pretraining objective, and full fine-tuning consequently induces larger perturbations to the pretrained parameters, which are more likely to disrupt the established cross-modal alignment. Detailed analysis of LoRA hyperparameters\footnote{There is a slight difference in evaluation resolution between Table~\ref{tab: model soup} and Table~\ref{tab:comparison} due to different codebase versions used for the experiments.} is presented in Appendix~\ref{apdx:lora_analysis}.
% ==============================
\section{Generation-Representation Scaling Law}
\label{sub: generation-representation scaling law}
% ==============================
The superior performance of \ours is primarily attributed to the intrinsic multimodal alignment capabilities of the backbone MLLMs, which we activate through lightweight contrastive fine-tuning. This observation prompts a fundamental question: \textit{What is the relationship between the inherent multimodal generative ability of MLLMs and their representation potential?} Through empirical analysis, we reveal a positive scaling correlation between these two aspects. Furthermore, we substantiate our empirical findings with a theoretical analysis with a PAC-Bayesian generalization bound, linking models' generative capabilities with the upper bound of their representation performance.

% ==============================
\subsection{The Relationship between Generative and Representational Capabilities}
\label{sec: relationship between gen and rep exp}
% ==============================
We conduct an empirical analysis to investigate the relationship between improving multimodal representation capabilities via text-only contrastive learning and the intrinsic generative capacity of MLLMs. Our analysis spans three different types of modality pairs, \ie OCR-based image-text tasks, video-text tasks, and audio-text tasks. The experimental setup is detailed below:
\begin{itemize}[leftmargin=*,topsep=-1pt]
\setlength{\itemsep}{2pt}
\setlength{\parskip}{0pt}
\setlength{\parsep}{0pt}
\item \textbf{OCR-Based Image-Text Tasks:} We evaluate OCR-dependent capabilities through paired representation and generative tasks. For representation tasks, we average scores from Visual Semantic Textual Similarity (V-STS-English) and Document Understanding. Generative performance is measured by averaging results from TextVQA~\cite{textvqa}, DocVQA~\cite{Mathew2020DocVQAAD}, OCRBench~\cite{liu2024ocrbench}, and ChartQA~\cite{masry-etal-2022-chartqa}.
\item \textbf{Video-Text Tasks:} Representation capabilities are assessed using video-text Recall@1 scores averaged across MSR-VTT~\cite{Msr-vtt} and ActivityNet~\cite{caba2015activitynet}. Generative performance combines results from Video-MME\textsubscript{w/ sub}~\cite{fu2024video} and MVBench~\cite{li2024mvbench}.
\item \textbf{Audio-Text Tasks:} We compute Recall@1 scores using Clotho \cite{drossos2019clotho} and AudioCaps \cite{kim2019audiocaps}. For generative evaluation, we average performance on MMAU~\cite{sakshi2025mmau} and VoiceBench~\cite{chen2024voicebench}, comprehensive benchmarks encompassing multiple sub-tasks.
\end{itemize}

\paragraph{Results.}
As depicted in Figure~\ref{fig:gen-rep scaling law}, we observe a consistently positive correlation between baseline generative performance before CL and the post-CL representational performance across different MLLM backbones on all task categories. This observation leads to the discovery of \textbf{G}eneration-\textbf{R}epresentation \textbf{S}caling \textbf{L}aw (\textbf{GRSL}), where \textit{the representational abilities of MLLMs, enhanced through contrastive refinement, scale positively with the model's original generative capability}. This insight suggests an alternative pathway for advancing multimodal models by harnessing the scaling effects of generative capacity. Next, we provide a theoretical analysis of GRSL, which formally links the MLLM's generative quality to the upper bound on its final embedding performance.

\begin{figure}[t]
    \centering
    \includegraphics[width=\linewidth]{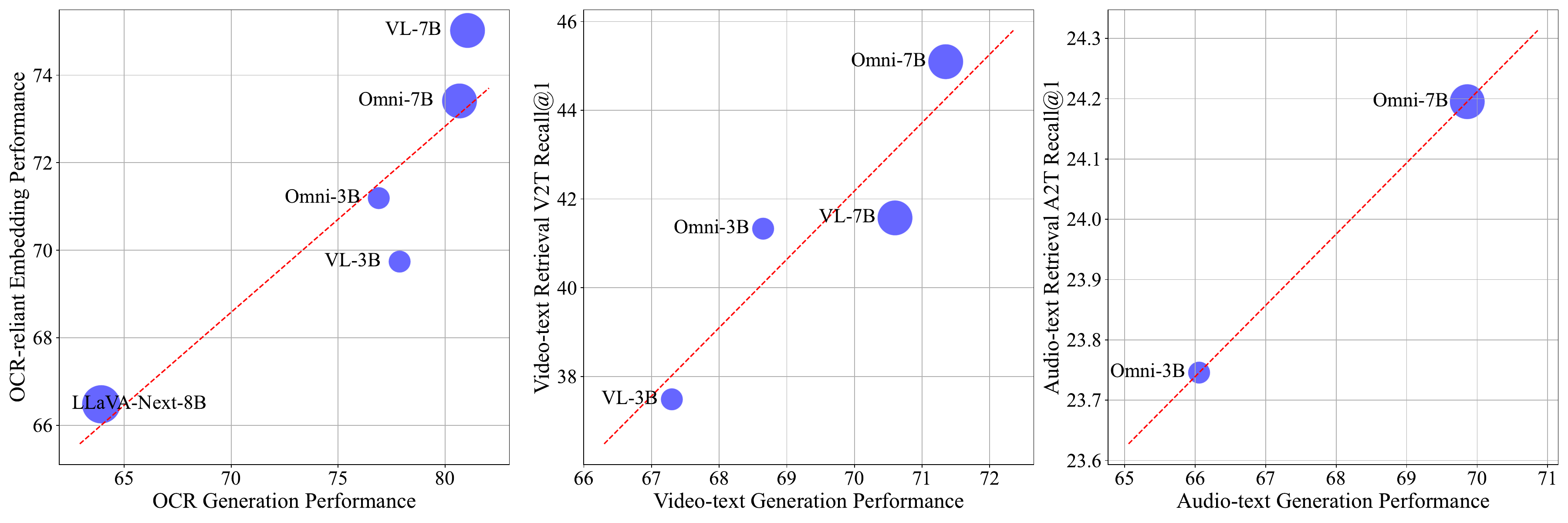}
    \caption{Scaling relationship between generation benchmark performance (X-axis) and representation benchmark performance after language-centric contrastive learning (Y-axis).}
    \label{fig:gen-rep scaling law}
    \vspace{-0.1in}
\end{figure}

% ==============================
\subsection{Theoretical Analysis of Generation-Representation Scaling Law}
% ==============================
We aim to prove that a stronger generative prior of MLLMs leads to better representations after contrastive fine-tuning. We formalize this intuition using the PAC-Bayesian framework.

\subsubsection{Definitions}

\begin{definition}[Population and Empirical Risk]
Let $\mathcal{D}$ be the data distribution. The \textbf{population contrastive risk} for a model $\theta$ is its true expected InfoNCE loss:
\begin{equation}
\mathcal{L}_c^{\mathrm{pop}}(\theta) := \mathbb{E}_{(X,Y) \sim \mathcal{D}} \left[ \mathcal{L}_{\mathrm{InfoNCE}}(X, Y; \theta) \right].
\end{equation}
Given a training set $\mathcal{S} = \{(X_i, Y_i)\}_{i=1}^n$ of size $n$, the \textbf{empirical contrastive risk} is defined as:
\begin{equation}
\hat{\mathcal{L}}_c^{\mathrm{emp}}(\theta) := \frac{1}{n} \sum_{i=1}^n \mathcal{L}_{\mathrm{InfoNCE}}(X_i, Y_i; \theta).
\end{equation}
\end{definition}

\begin{definition}[Generative Quality of the Prior]
Let $P$ be the prior distribution over the parameters of a pre-trained autoregressive generative model.
%representing a pre-trained generative model. 
In the common case where $P = \delta_{\theta_0}$ is a point mass of the model parameters at a pretrained checkpoint $\theta_0$, we define its \textbf{generative quality} via the mutual information captured by $\theta_0$:
\begin{equation}
I_P(X;Y) := I_{\theta_0}(X;Y).
\end{equation}
Under the standard approximation that the generative cross-entropy loss $\mathcal{L}_g(P)$ estimates the conditional entropy (ref. Appendix~\ref{apdx: conditional entropy}), we have the following approximation:
\begin{equation}
\mathcal{L}_g(P) \approx H(Y) - I_P(X;Y) \quad \implies \quad I_P(X;Y) \approx H(Y) - \mathcal{L}_g(P).
\end{equation}
% Thus, a lower generative loss $\mathcal{L}_g(P)$ corresponds to higher generative quality.
Here, $H(Y)$ is the Shannon entropy of the target data distribution, which quantifies the inherent diversity and complexity of the target modality (\eg text). Thus, for a fixed dataset, a lower generative loss $\mathcal{L}_g(P)$ corresponds to a higher mutual information and therefore a higher generative quality.

\end{definition}

\subsubsection{Central Hypothesis}

The core of our argument is that a good generative prior provides a ``warm start'' for contrastive fine-tuning. We formalize this as follows.

\begin{hypothesis}[Generative Warm Start]
\label{hyp:warm-start}
Let $P$ be a generative prior of a pre-trained autoregressive generative model and $Q$ the posterior of the generative model after optimized by the empirical contrastive loss $\mathcal{L}_c^{\mathrm{emp}}$. The expected empirical loss under $Q$ is bounded by:
\begin{equation}
\mathbb{E}_{\theta \sim Q} \left[ \hat{\mathcal{L}}_c^{\mathrm{emp}}(\theta) \right] \leq \log N - I_P(X;Y) + \epsilon_P,
\end{equation}
where $\epsilon_P \geq 0$ captures the gap between the information-theoretic optimum and the loss achieved after finite-step contrastive fine-tuning. A better prior (higher $I_P$) leads to a smaller $\epsilon_P$.
\end{hypothesis}

\paragraph{Justification:} A high $I_P(X;Y)$ implies that the representation of the generative model before contrastive finetuning, $f_P(X)$, is already predictive of $Y$. This pre-existing alignment means positive pairs are closer in representation space, enabling contrastive optimization to reach a lower empirical loss. The term $\log N - I_P(X;Y)$ is the theoretical lower bound on InfoNCE loss under ideal conditions. This hypothesis aligns with our empirical findings in Section~\ref{sec: relationship between gen and rep exp}, which show that stronger pretrained generative models yield better representations for downstream tasks, and is consistent with recent literature on decoder-based embedding models~\citep{zhang2025qwen3embed}.

\subsubsection{Main Theoretical Result}

\begin{theorem}[Generative-Contrastive PAC-Bayes Bound]
\label{thm:main-corrected}
Let $P$ be a generative prior of a pre-trained autoregressive generative model and $Q$ the posterior of the generative model after contrastive fine-tuning on a dataset of $n$ samples. Under Hypothesis~\ref{hyp:warm-start}, with probability at least $1-\delta$ over the draw of the training set, the expected population contrastive risk is bounded by:
\begin{equation}
\mathbb{E}_{\theta \sim Q} \left[ \mathcal{L}_c^{\mathrm{pop}}(\theta) \right]
\leq
\underbrace{\log N - I_P(X;Y)}_{\text{Generative Bottleneck}}
+ \underbrace{\epsilon_P}_{\text{Inefficiency Gap}}
+ \underbrace{\sqrt{\frac{\mathrm{KL}(Q \| P) + \log(1/\delta)}{2n}}}_{\text{PAC-Bayes Complexity Penalty}}.
\end{equation}
\end{theorem}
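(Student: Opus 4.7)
The plan is to split the target bound into a generalization piece and an empirical-training piece, and to bound each separately. A standard PAC-Bayes inequality will handle the first; Hypothesis~\ref{hyp:warm-start} will handle the second; adding them gives exactly the three-term decomposition in the statement. No new probabilistic machinery is needed beyond what is already invoked in the hypothesis.

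The first step is to apply McAllester's PAC-Bayes bound to the (rescaled, bounded) InfoNCE loss. I would take the prior to be $P$, the distribution over parameters of the pre-contrastive generative model (a point mass at $\theta_0$, or a narrow Gaussian around it), and the posterior to be $Q$, the distribution over parameters after contrastive fine-tuning. Crucially, $P$ is independent of the contrastive training set $\mathcal{S}$, since it is fixed by generative pretraining before $\mathcal{S}$ is drawn, so the standard PAC-Bayes prior-independence hypothesis is satisfied. This yields, with probability at least $1-\delta$ over the draw of $\mathcal{S}$,
\begin{equation*}
\mathbb{E}_{\theta \sim Q}\!\left[\mathcal{L}_c^{\mathrm{pop}}(\theta)\right]
\;\leq\;
\mathbb{E}_{\theta \sim Q}\!\left[\hat{\mathcal{L}}_c^{\mathrm{emp}}(\theta)\right]
+ \sqrt{\frac{\mathrm{KL}(Q \| P) + \log(1/\delta)}{2n}}.
\end{equation*}

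The second step is to invoke Hypothesis~\ref{hyp:warm-start} directly to control the expected empirical risk under $Q$,
\begin{equation*}
\mathbb{E}_{\theta \sim Q}\!\left[\hat{\mathcal{L}}_c^{\mathrm{emp}}(\theta)\right]
\;\leq\;
\log N - I_P(X;Y) + \epsilon_P,
\end{equation*}
and to substitute this into the PAC-Bayes inequality of step one. The result is exactly the statement of the theorem, with the three labelled summands identified with the generative bottleneck, the inefficiency gap, and the PAC-Bayes complexity penalty.

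The main obstacle, in my view, is not the mechanical combination but the measure-theoretic hygiene around the PAC-Bayes step. Two issues deserve care. First, InfoNCE is not natively bounded in $[0,1]$: one should either assume bounded similarity scores (standard in contrastive analyses) and rescale by $\log N + C$, or swap McAllester's form for a Catoni- or sub-Gaussian variant and correspondingly adjust the complexity term. Second, $\mathrm{KL}(Q \| P)$ must be finite, which precludes $P$ from being a strict Dirac mass unless $Q$ is concentrated at the same point. The cleanest workaround is to model $P$ and $Q$ as narrow Gaussians around $\theta_0$ and the fine-tuned parameters respectively, so that $\mathrm{KL}(Q \| P)$ reduces to the familiar quadratic parameter-drift term; the LoRA parameterization used in \ours, which by construction keeps the posterior close to the prior, is exactly what makes this penalty small and renders the bound non-vacuous in practice.
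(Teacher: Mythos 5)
Your proof is correct and follows exactly the same route as the paper: apply the standard PAC-Bayes generalization bound with prior $P$ and posterior $Q$, then substitute the empirical-risk bound from Hypothesis~\ref{hyp:warm-start}. Your additional remarks on boundedness of the InfoNCE loss and finiteness of $\mathrm{KL}(Q\|P)$ when $P$ is a point mass are technical caveats the paper does not address, and they are reasonable refinements rather than deviations from its argument.
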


\begin{proof}
We begin with the standard PAC-Bayesian generalization bound, which holds with probability at least $1-\delta$ for any posterior $Q$:
\begin{equation}
\label{eq:pac-bayes}
\mathbb{E}_{\theta \sim Q} \left[ \mathcal{L}_c^{\mathrm{pop}}(\theta) \right]
\leq
\mathbb{E}_{\theta \sim Q} \left[ \hat{\mathcal{L}}_c^{\mathrm{emp}}(\theta) \right]
+ \sqrt{\frac{\mathrm{KL}(Q \| P) + \log(1/\delta)}{2n}}.
\end{equation}
According to Hypothesis~\ref{hyp:warm-start}, the empirical risk is bounded as:
\begin{equation}
\label{eq:hypothesis-sub}
\mathbb{E}_{\theta \sim Q} \left[ \hat{\mathcal{L}}_c^{\mathrm{emp}}(\theta) \right] \leq \log N - I_P(X;Y) + \epsilon_P.
\end{equation}
Substituting~\eqref{eq:hypothesis-sub} into~\eqref{eq:pac-bayes} yields Theorem~\ref{thm:main-corrected}.
\end{proof}

\begin{corollary}[Generative Performance Governs Representation Bound]
\label{cor:gen-loss-bound}
By substituting the approximation $I_P(X;Y) \approx H(Y) - \mathcal{L}_g(P)$ into the main bound from Theorem~\ref{thm:main-corrected}, the expected population risk is directly governed by the prior's generative loss:
\begin{equation}
\mathbb{E}_{\theta \sim Q} \left[ \mathcal{L}_c^{\mathrm{pop}}(\theta) \right]
\lesssim
\mathcal{L}_g(P) + (\log N - H(Y)) + \epsilon_P + \sqrt{\frac{\mathrm{KL}(Q \| P) + \log(1/\delta)}{2n}}.
\end{equation}
This result formalizes the central claim of our work: a \textbf{lower generative loss} $\mathcal{L}_g(P)$ in the prior model directly tightens the upper bound on the final contrastive performance. Furthermore, the use of parameter-efficient methods like LoRA is justified as it keeps the complexity term $\mathrm{KL}(Q \| P)$ small, ensuring the benefits of the strong generative prior are not lost during fine-tuning.
\end{corollary}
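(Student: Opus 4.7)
The plan is to derive the corollary as a direct algebraic consequence of Theorem~\ref{thm:main-corrected} by substituting the information-theoretic identity that links the generative cross-entropy loss to the mutual information between modalities. First I would start from the bound proved in Theorem~\ref{thm:main-corrected}, which decomposes the population contrastive risk into the generative bottleneck $\log N - I_P(X;Y)$, the inefficiency gap $\epsilon_P$, and the PAC-Bayes complexity penalty $\sqrt{(\mathrm{KL}(Q\|P)+\log(1/\delta))/(2n)}$. Next I would invoke the definition of generative quality, which states $\mathcal{L}_g(P) \approx H(Y) - I_P(X;Y)$, equivalently $I_P(X;Y) \approx H(Y) - \mathcal{L}_g(P)$. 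Substituting this into the generative bottleneck converts $\log N - I_P(X;Y)$ into $\log N - H(Y) + \mathcal{L}_g(P)$; collecting the remaining terms yields the stated bound, with the symbol $\lesssim$ faithfully reflecting that a single approximation (not an inequality) has been introduced.

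The only delicate step is justifying that the autoregressive cross-entropy loss $\mathcal{L}_g(P)$ reliably estimates the conditional entropy $H(Y\mid X)$ over the data distribution; this is the claim deferred to Appendix~\ref{apdx: conditional entropy}. I would verify it by noting that under the standard assumption that the model distribution $p_{\theta_0}(Y\mid X)$ is a consistent estimator of the true conditional $p(Y\mid X)$, the expected negative log-likelihood $-\mathbb{E}_{(X,Y)\sim\mathcal{D}}[\log p_{\theta_0}(Y\mid X)]$ converges to $H(Y\mid X)$, so that the standard identity $I(X;Y) = H(Y) - H(Y\mid X)$ yields the desired relation. This approximation step, together with the fact that $H(Y)$ is a fixed data-distributional constant independent of the model, is the principal technical obstacle to making the statement an equality rather than a $\lesssim$-bound, and I would explicitly flag the approximation error as the reason for not writing $\leq$.

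Finally, I would interpret the resulting bound to motivate GRSL. Because $\log N - H(Y)$ depends only on the batch size and target modality, and $\epsilon_P$ shrinks as the prior improves by Hypothesis~\ref{hyp:warm-start}, the only remaining handle on the upper bound is the pretraining generative loss $\mathcal{L}_g(P)$; lowering it uniformly tightens the bound on the expected population contrastive risk. I would additionally observe that retaining a LoRA adapter is crucial: since $\mathrm{KL}(Q\|P)$ grows with the magnitude of parameter perturbation away from the pretrained checkpoint, the low-rank, low-norm update keeps the complexity penalty $\sqrt{(\mathrm{KL}(Q\|P)+\log(1/\delta))/(2n)}$ small, thereby preserving the benefits of the generative prior during contrastive refinement.
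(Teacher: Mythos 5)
Your proposal is correct and follows essentially the same route as the paper: a direct substitution of $I_P(X;Y) \approx H(Y) - \mathcal{L}_g(P)$ into the bound of Theorem~\ref{thm:main-corrected}, with the $\lesssim$ accounting for the approximation, and the justification of $\mathcal{L}_g(P) \approx H(Y\mid X)$ via the cross-entropy decomposition $\mathcal{L}_g(P) = H(Y\mid X) + D_{\mathrm{KL}}(p_{\text{data}}\,\|\,p_{\theta_0})$ matching Appendix~\ref{apdx: conditional entropy}. Your interpretation of the LoRA term and the constancy of $\log N - H(Y)$ also mirrors the paper's discussion, so no gaps to report.
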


\paragraph{Interpretation of the Bound.}
The theorem and its corollary reveal three distinct factors that govern the final representation quality:
\begin{enumerate}[leftmargin=*,topsep=-1pt]
\setlength{\itemsep}{2pt}
\setlength{\parskip}{0pt}
\setlength{\parsep}{0pt}
    \item \textbf{The Generative Bottleneck ($\log N - I_P(X;Y)$):} This term dictates the theoretical best-case performance. The quality of the final representation is fundamentally limited by the mutual information, $I_P(X;Y)$, captured by the generative prior. A stronger generative model (higher $I_P$) lowers this performance floor, creating a better potential outcome before fine-tuning even begins.

    \item \textbf{The Optimization Inefficiency ($\epsilon_P$):} This term captures the practical realities of fine-tuning. Our hypothesis posits that a better prior not only provides a better starting point but also creates a more favorable optimization landscape. This results in a smaller "inefficiency gap" $\epsilon_P$, meaning the fine-tuned model gets closer to the theoretical optimum.

    \item \textbf{The Fine-tuning Cost ($\sqrt{\dots}$):} The PAC-Bayes complexity penalty quantifies the risk of overfitting and straying too far from the prior. The use of parameter-efficient methods like LoRA is theoretically justified as it constrains the posterior $Q$ to be close to the prior $P$, keeping the $\mathrm{KL}(Q \| P)$ term small. This ensures we reap the benefits of contrastive learning without losing the powerful, generalizable knowledge encoded in the generative prior.
\end{enumerate}

% ==============================
\subsection{Improving Representation Bounds via Enhancing Generative Capability}
% ==============================
To further investigate the hypothesis that enhancing an MLLM's generative ability improves its representations, a key aspect of the Generation-Representation Scaling Law, we introduce a challenging cross-lingual multimodal document retrieval task, \textbf{SeaDoc}. This task enables a comprehensive evaluation of MLLM's representational capacity. In this task, an English query is used to retrieve a corresponding multimodal document page in a low-resource target language.

% ==============================
\subsubsection{Data Curation}
% ==============================
SeaDoc is a cross-lingual visual document retrieval benchmark specifically designed for low-resource \textbf{S}outh\textbf{E}ast \textbf{A}sian (SEA) languages. While building upon foundational concepts from existing visual document understanding benchmarks like ViDoRe~\citep{faysse2024colpali}, SeaDoc uniquely challenges MLLMs' visual document understanding capabilities on non-English languages at an unprecedented scale.

To construct SeaDoc, we curate a corpus of $5,055$ pages drawn from $29$ book publications from in-house collections across four SEA languages~\citep{zhang-etal-2025-seallms}—Thai, Vietnamese, Malay, and Lao. The documents span diverse subject areas, including economics, natural sciences, technology, history, politics, art, psychology, education, and country reports. We design a rigorous pipeline that uses Gemini‑2.5‑Flash~\citep{deepmind2025gemini25} to generate queries for each document page, ensuring that each query maps uniquely to its ground-truth page and that no other page in the corpus is a valid match, thereby eliminating false negatives. Human annotators then filter out low‑quality queries. This process yields $1,001$ high-quality English queries for retrieval over the $5,055$-page corpus in Southeast Asian languages. Details of the data construction process are provided in the Appendix~\ref{apdx:seadoc}.

% ==============================
\subsubsection{Experimental Settings}
% ==============================

\begin{wrapfigure}{r}{0.5\textwidth}
    \vspace{-15pt}
    \centering
    \includegraphics[width=0.5\textwidth]{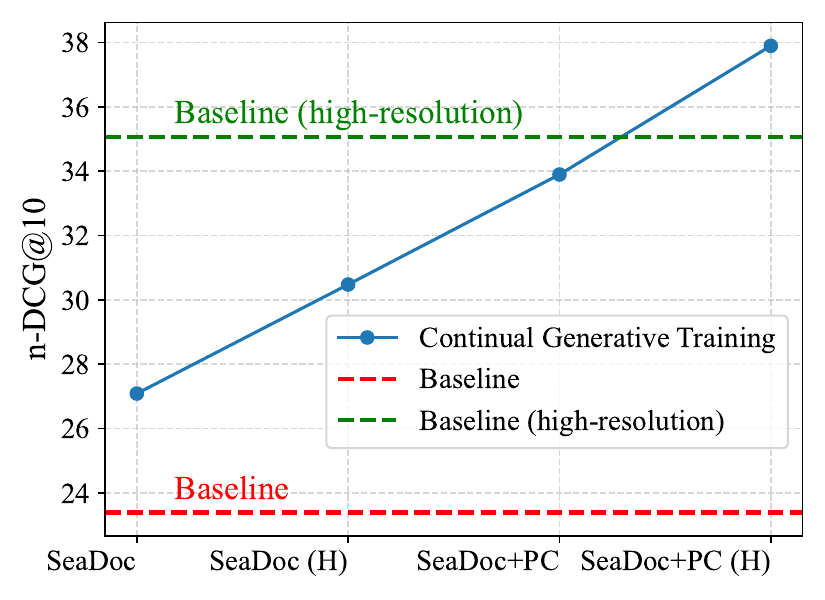}
    \caption{Retrieval performance of Qwen2.5-VL-3B fine-tuned on various continual generative finetuning strategies before CL on SeaDoc benchmark, where ``PC'' represents PixmoCaps and ``H'' denotes high-resolution. The results suggest that enhancing the generative ability of MLLMs before CL can enhance their embedding capability.}
    \label{fig:seadoc}
    \vspace{-10pt}
\end{wrapfigure}

We use Qwen2.5-VL-3B as the backbone and establish a baseline with lightweight contrastive learning. To assess whether enhanced generative ability benefits embedding quality, we further train a variant with additional generative pretraining before lightweight contrastive learning.

We apply supervised fine-tuning to enhance the model's image-to-text generative capability. This stage utilizes a mixture of image-to-text training data, comprising \textit{OCR data in SEA languages} (derived from the training split of SeaDoc) and \textit{general-domain image captioning data}, \ie PixmoCaps~\citep{deitke2024molmo}. The OCR data strengthens its capability in generating SEA languages from visual documents, while the inclusion of general image captioning data helps preserve its semantic alignment between image and text modalities in the general domain.

Given that text in multimodal documents can be small, requiring higher image resolution for MLLMs to accurately read textual content, we further employ two settings—high- and low-resolution—to assess the impact of image resolution. For the low-resolution setting, we follow standard practice by using a maximum of $262,144$ pixels~\citep{zheng2024llamafactory}. For the high-resolution setting, we use a $10\times$ larger maximum of $2,621,440$ pixels. Here, we adopt \textbf{nDCG@10} as the primary metric.

% ==============================
\subsubsection{Experimental Results}
% ==============================

Figure~\ref{fig:seadoc} summarizes the retrieval performance of the same backbone model fine-tuned using different continual SFT strategies before the same CL tuning process on our SeaDoc benchmark.\footnote{Unless otherwise specified, we evaluate model performance at the maximum resolution used during training.} We draw the following key observations:

(1) When SFT training is conducted exclusively on OCR-intensive data, \ie SeaDoc-train, at lower resolution, the model experiences a significant capability collapse compared to the baseline (Qwen2.5-VL-3B after lightweight CL). This SFT-induced degradation aligns with observations in recent multimodal reasoning research~\citep{chen2025geopqa,huang2025visionr1,deng2025openvlthinker,leng2025mmr1,wang2025vlrethinker,yuan2025vl}. Since foundation models have already undergone extensive SFT and RL, continual SFT can lead to overfitting and degrade models' generalization capability. 

(2) Training on SeaDoc with higher resolution partly mitigates this collapse. This is because the text in visual documents is typically small; training with higher resolution allows for better grounding of the generated output in the visual text of the source image, as opposed to overfitting to example-level visual cues.

(3) Incorporating PixmoCaps captions into the training set further boosts visual document retrieval performance post-CL. This is because general-domain image captioning data helps preserve the latent image-text alignment learned by MLLMs during pre-training. This preserved alignment is crucial for its effective exploitation by the subsequent text-only contrastive finetuning process. 

\section{Related Work}

\textbf{Omnimodal Representation Learning.}\;\;
Existing approaches to omnimodal representation learning~\citep{wang2023onepeaceexploringgeneralrepresentation,girdhar2023imagebind} typically rely on large-scale cross-modal pairs to train modality-specific encoders. Recent progress~\citep{lin2025mmembed,zhang2024gme,chen2025mme5} highlights the potential of MLLMs for image–text alignment. However, the effectiveness of exploiting the latent alignment inherent in MLLMs’ generative capabilities for omnimodal representation learning—and its underlying theoretical basis—remains unexplored.

\textbf{Modality-centric Representation Learning.}\;\;
Prior work explores representation learning for a single modality. For instance, ImageBind~\cite{girdhar2023imagebind} leverages the image modality as the anchor for contrastive learning to align with all other modalities. Web-SSL~\citep{fan2025scaling} explores language-free (thus ``vision-centric'') visual representation learning, which scales data volume to be on par with CLIPs to train DINOv2. By scaling up data volume, the vision-centric self-supervised learning can achieve OCR performance on par with CLIP, which is typically thought to attain through textual supervision~\citep{tong2024cambrian}. E5-V~\citep{jiang2024e5v} leverages text-only learning to generalize to images and composed retrieval tasks. We extensively study the language-centric view to train omnimodal representation models.

\textbf{Representation Capabilities.}\;\;
Through investigating $50$ models across $130$ tasks in $39$ languages, 
\citet{xiao2025mieb} report that CLIP's performance gains from scaling data, batch size, and model size have largely plateaued on advanced representation benchmarks, including interleaved encodings~\citep{wei2024uniir}, compositionality~\citep{Thrush_2022_CVPR}, textual visual representations~\citep{faysse2024colpali,xiao2024pixelsentencerepresentationlearning}, and image–multilingual text alignment~\citep{srinivasan2021wit}. 
They further highlight MLLM-based embedding models as a promising alternative, motivating our exploration of the relationship between representational and generative capabilities in MLLMs.
Prior work has explored this connection: Cambrian-1~\citep{tong2024cambrian} combines a shared language decoder with various vision encoders for downstream generation and demonstrates that the downstream performance of MLLMs scales with the representation capabilities of the vision encoders, while \citet{yang2024law} formalizes the law between visual representation and MLLM generative capabilities. In contrast, we explore a fundamentally different concept: the ``\textit{Generation-Representation Scaling law}'' between generation and representation capabilities of the MLLM itself. We see above as the ``\textit{Representation-Generation Scaling Law}'' where the MLLM's generation performance scales with the strength of modality-specific encoders. In this work, we explore a fundamentally different concept: the ``\textit{Generation-Representation Scaling law}'' where the MLLM's representation abilities scale with its own generation capabilities. 
Our findings align closely with \citet{xiao2024rarbreasoningretrievalbenchmark}, who demonstrate that LLM-based embeddings excel at instruction following and reasoning-oriented retrieval.

\section{Conclusion}
In this work, we reveal that the superior performance of MLLM-based embedding approaches originates from implicit cross-modal alignment established during generative pretraining, wherein the language decoder learns to integrate multimodal information within a unified representation space. Leveraging this insight, we develop \ours, a language-centric omnimodal embedding framework that treats contrastive learning as a lightweight refinement stage, thereby enhancing representational quality while preserving the model's generative structure. Building on this formulation, we introduce the Generation-Representation Scaling Law (GRSL), which establishes a positive correlation between a model's generative capacity and the effectiveness of contrastive refinement. Our theoretical analysis, through a PAC-Bayesian generalization bound, together with extensive empirical validation on diverse and challenging benchmarks, confirms both the efficacy of \ours and the generality of GRSL. Collectively, these findings re-conceptualize the role of contrastive learning and position generative pretraining—not merely the expansion of cross-modal data—as the central driver of scalable, efficient, and robust multimodal representation learning.

\section*{Limitations}
\label{limitation}
In this work, we have studied the scaling law between generative capabilities of pretrained MLLMs, their latent multimodal alignment, and their representational capabilities after contrastive learning. We use MLLMs that have gone through generative pretraining and those that have attained different levels of generative capabilities, and let them go through lightweight contrastive learning. During contrastive learning, model weights are minimally adjusted, through low-rank adaptation, to project the original knowledge space into an embedding space suitable for similarity matching. However, we do note that one can also jointly train generative loss and contrastive loss~\cite{ma2024multimodalgenerativeembeddingmodel,muennighoff2025generativerepresentationalinstructiontuning} to maintain a model's knowledge (through continual generative training), and enhance its representational power (through continual contrastive learning). Due to the high computational cost of this approach, we leave it as a promising direction for future work in the context of omnimodal representation learning.

\bibliographystyle{plainnat}
\bibliography{neurips_2025}

\newpage
\appendix
% =================================================
\section{Details of Additional Multimodal Data}
\label{apdx:syn_multimodal}
% =================================================
On top of our text-only all-NLI training corpus—which plays a crucial role in unlocking the model's representational capacity—we further construct approximately 94k multimodal training samples to align the embedding space with the downstream multimodal task space. Specifically, we include: (1) \textbf{Visual Document.} Unlike most prior studies, we intentionally construct only about 23k triplets from Colpali~\citep{faysse2024colpali} and Docmatix~\citep{laurenccon2024building}, rather than performing exhaustive data exposure. We found that large-scale visual document data, when not balanced with text and other task datasets, can degrade overall task generalization. (2) \textbf{Retrieval and Compositionality.} We include only 3k triplets from MS-COCO, aiming to introduce basic image–text alignment. To enhance robustness to varying input lengths, we apply augmentation techniques from LA(SER)\textsuperscript{3}~\citep{xiao2023length}. Interestingly, this not only improves length robustness but also enhances the model's spatial perception and image–text compositional reasoning. (3) \textbf{Multilingual/Diverse Text Data.} To enhance linguistic and contextual diversity, we sample several thousand examples from our Scale-1M dataset introduced in the main paper. (4) \textbf{General Synthetic Data.} We further construct around 60k synthetic samples in diverse formats to maintain and reinforce the model's instruction-following and interleaved alignment capabilities—which is important for tasks like VQA under the Reasoning-as-Retrieval paradigm. The diverse synthetic data also benefits classification tasks, improving both probing and zero-shot performance.

% =================================================
\section{Details of MIEB-Lite Benchmark}
\label{apdx:mieb_lite}
% =================================================
The MIEB-Lite benchmark comprises 51 tasks in 8 categories, where the details of each category are summarized as follows:
\begin{itemize}[leftmargin=*,topsep=-1pt]
\setlength{\itemsep}{2pt}
\setlength{\parskip}{0pt}
\setlength{\parsep}{0pt}
    \item \textbf{Visual STS}~\citep{xiao2024pixelsentencerepresentationlearning}: It conceptualizes traditional semantic textual similarity (STS) as a vision task by rendering text as images and evaluating the semantic understanding of visual encoders. Similarity scores are computed from the embeddings of image-text pairs and compared against human annotations using Spearman correlation. This task comprises three subcategories: \textit{English} (STS 13 and STS 15), \textit{cross-lingual} (STS-17, with image pairs in different languages, \eg Arabic–English), and \textit{multilingual} (STS-b, with pairs in the same language, \eg Italian–Italian). Visual STS naturally assesses a model's interleaved encoding ability to capture semantic meaning from text in image form, with \textbf{Spearman correlation} as the primary evaluation metric.
    \item \textbf{Document Understanding/Visual Document Retrieval}: MIEB-lite selects $6$ tasks from the Vidore benchmark~\citep{faysse2024colpali}, which is to retrieve visual documents that contain information to solve the problem in the query. This task assesses a model's ability to understand the complex layouts and textual information in visual documents, and the interleaved image-text alignment. Here we use \textbf{nDCG@5} as the evaluation metric.
    \item \textbf{Image Linear Probing}: MIEB-lite selects 8 challenging linear-probing datasets, including Country211, DTD, EuroSAT, GTSRB, OxfordPets, PatchCamelyon, RESISC45, and SUN397, which MLLMs typically struggle compared to CLIP-style models, as indicated by the MIEB benchmark leaderboard. We follow \citet{xiao2025mieb} to adopt 16-shot linear probing, which closely preserves ranking compared to full-dataset probing, and report \textbf{accuracy} as the metric.

    \item \textbf{Compositionality Evaluation}: It evaluates fine-grained alignment of image-text features, requiring retrieving the groundtruth texts corresponding to the correct composition of all elements, \textit{e.g.}, an accurate fine-grained caption of an image, and vice versa for images given texts. This category includes ARO-Benchmark~\citep{yuksekgonul2023aro} and Winoground~\citep{Thrush_2022_CVPR}. Here we use \textbf{accuracy} as the evaluation metric.

    \item \textbf{Vision-centric QA}: Given an interleaved input composed of a question conditioned on an image, the task requires models to retrieve the correct answer under the reasoning-as-retrieval paradigm \citep{xiao2024rarbreasoningretrievalbenchmark}. This task category is mostly made of tasks assessing vision-centric capabilities \citep{tong2024cambrian}, such as spatial relation perception, depth estimation, and relative distance. Here we use \textbf{accuracy} as the evaluation metric.

    \item \textbf{Retrieval}: MIEB-Lite adopts 11 retrieval tasks, consisting of image-only retrieval, image-text retrieval, and interleaved retrieval, providing a comprehensive assessment of models’ modality-specific and composed encoding capabilities. In addition, it also selects WIT datasets~\citep{srinivasan2021wit} and XM3600~\citep{thapliyal2022crossmodal}, totally covering image retrieval tasks across $38$ different languages, \textit{i.e.}, multilingual image retrieval, to assess a model's alignment capability between image and multilingual text embeddings, using \textbf{nDCG@10} as the primary metric.
    
    \item \textbf{Zero-shot Classification}: Zero-shot Classification assesses classification in a similarity-matching fashion. We use text prompts like ``an image of a \{label\}'' following common practices \citet{radford2021learning} and \citet{xiao2025mieb}. MIEB-lite selects 7 challenging fine-grained zero-shot classification tasks, including CIFAR100, Country211, FER2013, FGVCAircraft, Food101, OxfordPets, and StanfordCars. Here we use \textbf{accuracy} as the evaluation metric.

    \item \textbf{Clustering}: Clustering provides an extra lens to inspect the clustered structure of embeddings. MIEB-lite adopts two clustering tasks, including fine-grained tasks such as Imagenet-Dog15~\citep{deng2009imagenet}, which MLLM-based embedding models typically fail compared to CLIP-style models~\citep{xiao2025mieb}. The \textbf{Normalized Mutual Information (NMI)} is utilized as the main evaluation metric.

\end{itemize}

% =================================================
\section{Details of MIEB-Sub18 Benchmark}
\label{apdx:sub_benchmark}
% =================================================
We further select a smaller-scale subset than MIEB-lite, including $18$ tasks from MIEB \citet{xiao2025mieb} as \textbf{MIEB-Sub18}, which comprises $47$ subtasks that are considered most challenging to the image-text embedding models, particularly in evaluating the capabilities of visual text representation, multilingual understanding, and interleaved encodings. Specifically, we focus on Visual STS~\citep{xiao2024pixelsentencerepresentationlearning}, multilingual image retrieval~\citep{srinivasan2021wit}, and document understanding from Vidore~\citep{faysse2024colpali}. Additionally, we assess three image linear probing tasks where MLLM embeddings underperform relative to CLIP and self-supervised vision models, as reported on the MIEB leaderboard~\cite{MTEB_leaderboard}. All evaluations are conducted using the official MIEB codebase~\citep{xiao2025mieb}.
\begin{itemize}[leftmargin=*,topsep=-1pt]
\setlength{\itemsep}{2pt}
\setlength{\parskip}{0pt}
\setlength{\parsep}{0pt}
    \item \textbf{Visual STS}~\citep{xiao2024pixelsentencerepresentationlearning}: It conceptualizes traditional semantic textual similarity (STS) as a vision task by rendering text as images and evaluating the semantic understanding of visual encoders. Similarity scores are computed from the embeddings of image-text pairs and compared against human annotations using Spearman correlation. This task comprises three subcategories: \textit{English} (STS-12$\sim$16), \textit{cross-lingual} (STS-17, with image pairs in different languages, \eg Arabic–English), and \textit{multilingual} (STS-b, with pairs in the same language, \eg Italian–Italian). Visual STS naturally assesses a model's interleaved encoding ability to capture semantic meaning from text in image form, with \textbf{Spearman correlation} as the primary evaluation metric.
    \item \textbf{Multilingual Image Retrieval}: We utilize the WIT datasets~\citep{srinivasan2021wit} and select its image retrieval subtasks across $11$ different languages. This task accesses a model's alignment capability between image and multilingual text embeddings with \textbf{nDCG@10} as the main metric.
    \item \textbf{Document Understanding}: We select $7$ tasks from the Vidore benchmark~\citep{faysse2024colpali}, which is to retrieve visual documents that contain information to solve the problem in the query. This task assesses a model's ability to handle the complex layouts in visual documents and the interleaved image-text alignment. Here we use \textbf{nDCG@5} as the evaluation metric.
    \item \textbf{Image Linear Probing}: We evaluate three linear probing tasks—Stanford Cars~\citep{krause20133d}, BirdSnap~\citep{berg2014birdsnap}, and Country211~\citep{radford2021learning}—which MLLMs struggle the most, as indicated by the MIEB benchmark leaderboard. We follow \citet{xiao2025mieb} to adopt 16-shot linear probing, which closely preserves ranking compared to full-dataset probing, and report \textbf{accuracy} as the metric.
\end{itemize}

% =================================================
\section{Impact of LoRA Hyperparameters for \ours}
\label{apdx:lora_analysis}
% =================================================
Our approach, \ours, employs LoRA fine-tuning for lightweight contrastive learning, which aims to refine MLLM representations while minimally perturbing the model's intrinsic knowledge and abilities, thereby effectively preserving its inherent cross-modal alignment capability. We encapsulate this benefit as the ``learn less, forget less'' characteristic of LoRA. Here, we further analyze the impact of two critical LoRA hyperparameters—rank ($r$) and $\alpha$—on the performance of \ours.

In LoRA, rank ($r$) and alpha ($\alpha$) jointly control the capacity for new knowledge integration and the extent to which it modulates existing knowledge. The rank defines the dimensionality of the trainable weight matrices used to approximate the original model's weight updates; a higher rank thus increases the capacity for injecting new knowledge. Conversely, alpha scales the contribution of these matrices to the overall model weights, meaning a larger alpha amplifies the extent to which this new knowledge is infused into the model.

\begin{table}[t]
\centering
\caption{Comparison of different LoRA ranks and alpha values using Qwen2.5-VL-7B as the backbone. *$r=256,\;\alpha=512$ setting experiences unrecoverable loss spikes in training.}
\resizebox{\linewidth}{!}{
\begin{tabular}{cc|  *{9}{c}}
\toprule
\multirow{2}{*}{Rank ($r$)} & \multirow{2}{*}{Alpha ($\alpha$)} & \multirow{2}{*}{Comp.} & VC- &Multiling. & V-STS & V-STS & V-STS & Doc. & Linear & \multirow{2}{*}{Average}\\
&&&QA& Img. Rtr & (eng.) & (cross) & (multi) &  Und.  &  Probe \\
\midrule
8 & 16 &48.35&58.08& 56.64 & 85.05 & 85.30 & 83.48 & 67.49 & 53.91 & 67.29 \\
64 & 16 &55.64&60.62& 55.62 & 84.60 & 85.16 & 83.40 & 65.76 & 52.44 & 67.90 \\
64 & 128 &43.40&51.86& 58.93 & 84.98 & 84.44 & 83.39 & 67.66& 57.24& 66.49 \\
256 & 16 &52.29&57.30& 57.49 & 84.88 & 85.68 & 83.61 & 66.95 & 53.36 & 67.70 \\
256 & 128 &43.07&57.56& 56.32 & 85.89 & 84.82 & 83.98 & 67.33 & 55.51 & 66.81 \\
256 & 512 &85.52*&39.24*&0.70*&5.90*&12.90*&7.80*&0.90*&1.50*&19.31*\\
\bottomrule
\end{tabular}}
\label{tab:comparison-ranks-alphas}
\end{table}

Table~\ref{tab:comparison-ranks-alphas} presents the results of \ours under different values of rank and alpha. We observe distinct patterns for different task categories, and there doesn't exist a global optimal setting of LoRA hyperparameters. For instance, LoRA hyperparameters bring minimal variations to tasks optimized by the training (\eg V-STS, whose textual counterpart STS is deemed directly optimized by All-NLI in text embedding literature, is invariant to LoRA hyperparameters). The optimal performance for multilingual retrieval, document understanding, and image linear probe generally occurs when alpha $\alpha$ is scaled up appropriately to rank $r$, such as $r=8, \alpha=16$ and $r=64, \alpha=128$. However, we notice that for tasks whose capabilities assessed largely differ from those which the training set optimizes, \eg compositionality and vision-centric QA, a larger alpha $\alpha$ generally brings significant performance degradation, showing the importance of the preservation of the base model's knowledge for generalization to OOD tasks. We observe that with rank 256 and alpha 512, models experience unrecoverable loss spikes in training.

We acknowledge that an optimal rank and alpha likely exist for models of each size, striking a balance between introducing new knowledge and the extent to which it modifies pretrained model weights. We leave a more comprehensive empirical analysis and theoretical study to quantify this relationship for future work.

% =================================================
\section{Details of the data construction process of SeaDoc}
\label{apdx:seadoc}
% =================================================
Specifically, we utilize Gemini-2.5-Flash~\citep{deepmind2025gemini25} to annotate each PDF page by sequentially applying OCR, translating the content into English, and generating an English query answerable exclusively from that specific page. This results in $5,055$ annotated \{OCR, English translation, English query\} triplets. To construct a high-quality query pool for the retrieval dataset in SeaDoc, we implement a three-stage quality control process:
\begin{enumerate}[leftmargin=*,topsep=-1pt]
\setlength{\itemsep}{2pt}
\setlength{\parskip}{0pt}
\setlength{\parsep}{0pt}
    \item Qwen2.5-7B-Instruct is first used to filter out functional pages (\eg title pages, author pages, tables of contents), which reduces the dataset to $4,491$ content page annotations.
    \item The same model then scores these annotations for \textit{Quality} and \textit{Groundedness} on a 10-point scale. Only questions with a quality score of at least \textbf{9} and a groundedness score of \textbf{10} are retained. Note that \textit{Quality} measures the informativeness of the content and relevance of the query, and \textit{Groundedness} measures the exclusivity of the answer to the page.
    \item Our in-house linguists conduct a final review of the remaining triplets to ensure their quality.
\end{enumerate}
As a result, we derive $1,001$ high-quality queries to be used for retrieval tasks within the $5,055$ page corpus.

For conducting additional OCR-intensive generative training, we construct a training set leveraging images that do not correspond to retrieval test set queries, resulting in 4k seed images. We construct 5 SFT tasks per image: 1) OCR the image. 2) OCR the image, then generate a question from the image. 3) Provide the English translation given the OCR'd text. 4) Provide the English translation directly from the image. 5) Provide the answer to the generated query. Note that compared to the SeaDoc test set, the training set is separately generated and includes an additional ``provide answer to the generated question'' part in the seed prompt. This process leads us to an around 20k training set to enhance targeted generative capability on low-resource visual documents, which we also explore combining with the PixmoCap dataset (710k) for general capability preservation in the main experiments.

% =================================================
\section{Relationship Between Generative Loss and Conditional Entropy}
\label{apdx: conditional entropy}
% =================================================
\begin{definition}[Generative Quality of the Prior]
Let $P$ be the prior distribution over the parameters of a pre-trained autoregressive generative model, centered at $\theta_0$. We define its \textbf{generative quality} via the mutual information $I_P(X;Y) := I_{\theta_0}(X;Y)$ that its representations capture between the input $X$ and the target output $Y$.

The mutual information is defined as $I_P(X;Y) = H(Y) - H(Y|X)$. While the true conditional entropy $H(Y|X)$ is unknown, it can be estimated by the model's generative cross-entropy loss, $\mathcal{L}_g(P)$. The formal relationship is:
\begin{equation}
    \mathcal{L}_g(P) = H(Y|X) + D_{\mathrm{KL}}(p_{\text{data}}(Y|X) \parallel p_{\theta_0}(Y|X)),
\end{equation}
where $p_{\theta_0}$ is the model's predictive distribution. For a well-trained MLLM, the goal of minimizing generative loss is to minimize this KL divergence. Thus, for a strong prior, we can use the approximation $H(Y|X) \approx \mathcal{L}_g(P)$. Substituting this into the definition of mutual information yields:
\begin{equation}
I_P(X;Y) \approx H(Y) - \mathcal{L}_g(P).
\end{equation}
Here, $H(Y)$ is the entropy of the target data, which is constant for a given dataset. Therefore, a \textbf{lower generative loss} $\mathcal{L}_g(P)$ directly corresponds to \textbf{higher mutual information} and thus a higher generative quality of the prior.
\end{definition}

\end{document}